\documentclass[10pt,journal,cspaper,compsoc]{IEEEtran}

\usepackage{url, graphicx, subfigure}
\usepackage{amsmath}
\usepackage{amssymb}
\usepackage{fancyhdr}
\usepackage{mathrsfs}
\usepackage{epsfig}
\usepackage{bm}
\usepackage{graphics}
\usepackage{epsf}
\usepackage{multirow, paralist}
\usepackage{mathtools}
\usepackage{url}
\usepackage{algorithmic,algorithm} 
\usepackage{array}

\newcommand{\tabincell}[2]{\begin{tabular}{@{}#1@{}}#2\end{tabular}}
\newcommand{\PreserveBackslash}[1]{\let\temp=\\#1\let\\=\temp}
\newcolumntype{C}[1]{>{\PreserveBackslash\centering}p{#1}}
\newcolumntype{R}[1]{>{\PreserveBackslash\raggedleft}p{#1}}
\newcolumntype{L}[1]{>{\PreserveBackslash\raggedright}p{#1}}
\allowdisplaybreaks

\ifCLASSOPTIONcompsoc
\else
\fi

\ifCLASSINFOpdf
\else
\fi

\hyphenation{op-tical net-works semi-conduc-tor}

\newtheorem{definition}{Definition}
\newtheorem{lemma}{Lemma}
\newtheorem{thm}{Theorem}

\newtheorem{Remark}{Remark}
\newtheorem{proposition}{Proposition}

\def \a {\mathbf{a}}

\def \u {\mathbf{u}}

\def \v {\mathbf{v}}

\def \w  {\mathbf{w}}

\def \x {\mathbf{x}}

\def \z {\mathbf{z}}

\def \sign {\mathrm{sign}}

\def \bqsa {\begin{eqnarray}}
\def \eqsa {\end{eqnarray}}
\def \bqs {\begin{equation}\begin{aligned}}
\def \eqs {\end{aligned}\end{equation}}

\begin{document}
\title{Robust Online Multi-Task Learning with Correlative and Personalized Structures}

\author{Peng Yang,~\thanks{P. Yang and X. Gao are with Computer, Electrical and Mathematical Sciences \& Engineering Division at King Abdullah University of Science and Technology, Saudi Arabia. E-mail:$\{$peng.yang.2, xin.gao$\}$@kaust.edu.sa}
        \and
        Peilin Zhao,~\thanks{P. Zhao is a senior algorithm expert in Ant Financial Service Group, China. E-mail: peilinzhao@hotmail.com}
        \and
        Xin Gao~\thanks{X. Gao and P. Zhao are corresponding authors.
        }
        }
\date{}

\IEEEcompsoctitleabstractindextext{
\begin{abstract}
Multi-Task Learning (MTL) can enhance a classifier's generalization performance by learning multiple related tasks simultaneously.
Conventional MTL works under the offline or batch setting, and suffers from expensive training cost and poor scalability.
To address such inefficiency issues, online learning techniques have been applied to solve MTL problems.
However, most existing algorithms of online MTL constrain task relatedness into a presumed structure via a single weight matrix, which is a strict restriction that does not always hold in practice.
In this paper, we propose a robust online MTL framework that overcomes this restriction by decomposing the weight matrix into two components: the first one captures the low-rank common structure among tasks via a nuclear norm and the second one identifies the personalized patterns of outlier tasks via a group lasso.
Theoretical analysis shows the proposed algorithm can achieve a sub-linear regret with respect to the best linear model in hindsight.
Even though the above framework achieves good performance, the nuclear norm that simply adds all nonzero singular values together may not be a good low-rank approximation.
To improve the results, we use a log-determinant function as a non-convex rank approximation. The gradient scheme is applied to optimize log-determinant function
and can obtain a closed-form solution for this refined problem. Experimental results on a number of real-world applications verify the efficacy of our method.
\end{abstract}

\begin{keywords}
artificial intelligence, learning systems, online learning, multitask learning, classification.
\end{keywords}
}

\maketitle

\IEEEdisplaynotcompsoctitleabstractindextext

\IEEEpeerreviewmaketitle

\section{Introduction}

Multi-Task Learning (MTL) aims to enhance the overall generalization performance by learning multiple related tasks simultaneously.
It has been extensively studied from various points of view~\cite{Caruana97,Evgeniou05,WATR2010,qi2010semi}.
As an example, the common tastes of users (i.e., tasks) with respect to movies (i.e., instances) can be harnessed into a movie recommender system using MTL~\cite{pan2010transfer}.
Most MTL methods run under the offline learning setting where the training data for each task is available beforehand.
However, offline learning methods are generally inefficient, since they suffer from a high training cost and poor scalability.
This is especially true when it comes to the large-scale streaming data.
As a remedy, MTL has been studied under the online setting, in which the model runs over a sequence of data by processing them one by one~\cite{anderson2008theory}.
After updating the model in each round, the current input will be discarded.
As a result, online learning algorithms are efficient and scalable, and have been successfully applied to a number of MTL applications~\cite{Saha,lugosi2009online,ruvolo2014online,attenberg2009collaborative,yang2016learning}.

In this paper, we investigate MTL under the online setting.
Existing online MTL methods assume that all tasks are related with each other and simply constrain their relationships via a presumed structure~\cite{Saha,CavallantiCG10}.
However, such a constraint may be too restrictive and rarely hold in the real-life applications, as the personalized tasks with individual traits often exist~\cite{gong2012robust}.
We attempt to address this drawback through a creative formulation of online MTL that consists of two components:
the first component captures a low-rank \emph{correlative} structure over the related tasks,
while the second one represents the \emph{personalized} patterns of individual tasks.

Specifically, our algorithm learns a weight matrix which is decomposed into two components as aforementioned.
A nuclear norm regularization is imposed on the first component to induce a low-rank \emph{correlative} structure of the related tasks.
A group lasso penalty is applied onto the second component of all individual tasks to identify the outliers.
Next, we apply an online projected gradient scheme to solve this non-smooth problem with a closed-form solution for the \emph{correlative} and \emph{personalized} components.
This gives our algorithm two advantages: 1) it is efficient to make predictions and update models in a real-time manner; 2) it can achieve a good trade-off between the common and personalized structures.
We provide a theoretical evaluation for our algorithm by giving a proof that our algorithm can achieve a sub-linear regret compared to the best linear model in hindsight.

Although our algorithm achieves good performance, it may not accurately approximate a low-rank matrix: the nuclear norm is essentially the $l_1$ norm of singular values, known for being biased in estimation since large singular values are detrimental to the approximation. To address this issue, we use a log-determinant function to approximate the matrix rank, that is able to reduce the contributions of large singular values while keeping those of small singular values close to zero. To solve this non-convex optimization problem, a proximal gradient algorithm is derived to adaptively learn such a low-rank structure with a closed-form solution.
In addition, we prove that there is a unique root of the refined objective under a proper parameter setting.
Finally, we conduct comparative experiments against a variety of state-of-the-art techniques on three real-world datasets.
Empirically, the refined algorithm with the log-determinant function achieves better performance than that with the nuclear norm due to a better low-rank approximation.

The rest of this paper is organized as follows.
Section 2 introduces related work.
The problem setting and the proposed algorithm with analysis are presented in Section 3 and Section 4, respectively.
Section 5 provides experimental results.
Section 6 concludes this paper.

\section{Related Work}

In this section, we briefly introduce works related to MTL in the offline and online settings, followed by the low-rank matrix approximation.

\subsection*{Multi-Task Learning}
Conventional offline or batch MTL algorithms can be broadly classified into the following two categories: \emph{explicit} parameter sharing and \emph{implicit} parameter sharing.
In the first category, all tasks can be made to share some common parameters explicitly.
Such common parameters include hidden units in neural networks~\cite{baxter2000model}, prior in hierarchical Bayesian models~\cite{bakker2003task,yu2005learning}, feature mapping matrix~\cite{AndoZ05} and classification weight~\cite{Evgeniou04}.
On the other side, the shared structure can be estimated in an implicit way by imposing a low rank subspace~\cite{pong2010trace,negahban2011estimation}, e.g. Trace-norm Regularized Multi-task Learning (TRML)~\cite{zhou2012mutal} captured a common low-dimensional subspace of task relationship with a trace-norm regularization; or a common set of features~\cite{argyriou2008convex,yang2009heterogeneous}, e.g. Multi-Task Feature Learning (MTFL)~\cite{ArgyriouEP06} learned a common feature across the tasks in an unsupervised manner. Besides, \cite{AbernethyBR07} and \cite{Agarwal} proposed a few experts to learn the task relatedness on the entire task set. These MTL techniques have been successfully used in the real-world applications, e.g. multi-view action recognition~\cite{yan2014multitask}, spam detection~\cite{haideceptive}, head pose estimation~\cite{yan2016multi}, etc.

Compared to the offline learning, online learning techniques are more efficient and suitable to handle massive and sequential data~\cite{yang2015min,conf/icml/ZhaoHJY11,yang2016efficient}. An early work ~\cite{Dekel,DekelLS06}, Online Multi-Task Learning (OMTL), studied online learning of multiple tasks in parallel. It exploited the task structure by using a global loss function. Another work \cite{LiCHLJ11} proposed a collaborative online framework, Confidence-weighted Collaborative Online Multi-task Learning (CW-COL), which learned the take relativeness via combining the individual and global variations of online Passive-Aggressive (PA) algorithms~\cite{Crammer}.
Instead of fixing the task relationship via a presumed structure~\cite{CavallantiCG10}, a recent Online Multi-Task Learning approach introduced an adaptive interaction matrix which quantified the task relevance with LogDet Divergence (OMTLLOG) and von-Neumann Divergence (OMTLVON)~\cite{Saha}, respectively. Most Recently, \cite{crammer2012learning} proposed an algorithm, Shared Hypothesis model (SHAMO), which used a K-means-like procedure to cluster different tasks in order to learn the shared hypotheses.
Similar to SHAMO, \cite{murugesan2016adaptive} proposed an Online Smoothed Multi-Task Learning with Exponential updates (OSMTL-e). It jointly learned both the per-task model parameters and the inter-task relationships in an online MTL setting.
The algorithm presented in this paper differs from existing ones in that it can learn both a common structure among the correlative tasks and the individual structure of outlier tasks.

\subsection*{Low-Rank Matrix Approximation}
In many areas (e.g. machine learning, signal and image processing), high-dimensional data are commonly used. Apart from being uniformly distributed, high-dimensional data often lie on the low-dimensional structures. Recovering the low-dimensional subspace can well preserve and reveal the latent structure of the data. For example, face images of an individual under different lighting conditions span a low-dimensional subspace from an ambient high-dimensional space \cite{basri2003lambertian}. To learn low-dimensional subspaces, recently proposed methods, such as Low-Rank Representation (LRR) \cite{liu2013robust} and Low-Rank Subspace and Clustering (LRSC) \cite{favaro2011closed}, usually depended on the nuclear norm as a convex rank approximation function to seek low-rank subspaces. Unlike the rank function that treats them equally, the nuclear norm simply adds all nonzero singular value together, where the large values may contribute exclusively to the approximation, rendering it much deviated from the true rank. To resolve this problem, we propose a log-determinant function to approximate the rank function, which is able to reduce the contributions of large singular values while keeping those of small singular values close to zero. To the best of our knowledge, this is the first work that exploits a log-determinant function to learn a low-rank structure of task relationship in the online MTL problem.

\section{Problem Setting}

In this section, we first describe our notations, followed by the problem setting of online MTL.

\subsection{Notations}

Lowercase letters are used as scalars, lowercase bold letters as vectors, uppercase letters as elements of a matrix, and bold-face uppercase letters as matrices.
$\x_i$ and $X_{ij}$ denote the $i$-th column and the $(i,j)$-th element of a matrix $\mathbf{X}$, respectively.
Euclidean and Frobenius norms are denoted by $\|\cdot\|$ and $\|\cdot\|_F$, respectively.
In particular, for every $q,p \geq 1$, we define the $(q,p)$-norm of $\mathbf{A} \in \mathbb{R}^{d\times m}$ as $\|\mathbf{A}\|_{q,p} = (\sum_{i=1}^m\|\a_i\|_q^p)^{\frac{1}{p}}$.
When the function $f(\w)$ is differentiable, we denote its gradient by $\nabla f(\w)$.

\subsection{Problem Setting}

According to the online MTL setting, we are faced with $m$ different but related classification problems, also known as tasks.
Each task has a sequential instance-label pairs, i.e., $\{(\x_t^i,y_t^i)\}_{1\leq t \leq T}^{1\leq i\leq m}$, where $\x_t^i \in \mathbb{R}^d$ is a feature vector drawn from a single feature space shared by all tasks, and $y_t^i \in \{\pm 1\}$.
The algorithm maintains $m$ separate models in parallel, one for each of the $m$ tasks.
At the round $t$, $m$ instances $\{\x_t^1,\ldots,\x_t^m\}$ are presented at one time.
Given the $i$-th task instance $\x_t^i$, the algorithm predicts its label using a linear model $\z_t^i$, i.e., $\hat{y}_t^i = \textrm{sign}(\hat{p}_t^i)$, where $\hat{p}_t^i = \z_t^{i\top}\x_t^i$ and $\z_t^i$ is the weight parameter of the round $t$.
The true label $y_t^i$ is not revealed until then.
A hinge-loss function is applied to evaluate the prediction,
\bqs
f_t^i(\z_t^i) = [1 - y^i_t \hat{p}_t^i]_+= [1 - y^i_t\z_t^i\cdot\x^i_t]_{+},
\eqs
where $[a]_{+} = \max\{0,a\}$.
The cumulative loss over all $m$ tasks at the round $t$ is defined as
\bqs
F_t(Z_t) = \sum_{i=1}^m f_t^i(\z_t^i),
\eqs
where $Z_t = [\z_t^1,\ldots,\z_t^m] \in \mathbb{R}^{d\times m}$ is the weight matrix for all tasks.
Inspired by the Regularized Loss Minimization (RLM) in which one minimizes an empirical loss plus a regularization term jointly~\cite{shalev2011stochastic}, we formulate our online MTL to minimize the regret compared to the best linear model in hindsight,
\bqs\label{RegretStatic}\notag
R_\phi \triangleq \sum_{t=1}^T [F_t(Z_t) + g(Z_t)] - \inf_{Z\in\Omega}\sum_{t=1}^T [F_t(Z) + g(Z)],
\eqs
where $\Omega\subset\mathbb{R}^d$ is a closed convex subset and the regularizer $g:\Omega\rightarrow\mathbb{R}$ is a convex regularization function that constraints $\Omega$ into simple sets, e.g. hyperplanes, balls, bound constraints, etc.
For instance, $g(Z) = \|Z\|_1$ constrains $Z$ into a sparse matrix.

\vspace{-0.01in}
\begin{figure}
\centering
\caption{Learning Personalized and Low-rank Structures from Multiple Tasks}\label{Flow-Chart}
\subfigure {\includegraphics[width=0.52\textwidth,height=5.1cm]{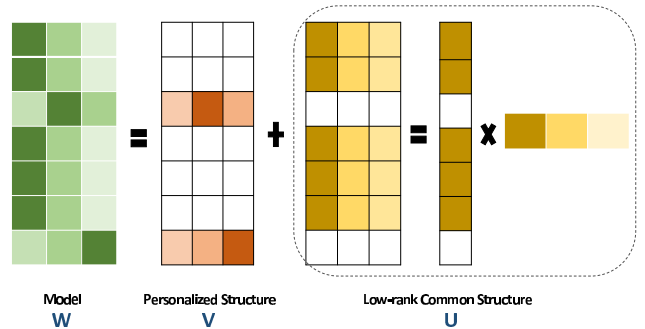}}
\end{figure}

\section{Algorithm}

We propose to solve the regret $R_\phi$ by two steps: 1) to learn the correlative and personalized patterns over multiple tasks; 2) to achieve an optimal solution for the regret $R_\phi$.

\subsection{Correlative and Personalized Structures}

We propose a novel formulation for online MTL that incorporates two components, as illustrated in Fig. \ref{Flow-Chart}. The first component captures a low-rank \emph{common} structure $U$ over the similar tasks, where one model (or pattern) can be shared cross the related tasks. As outlier tasks often exist in real-world scenarios, the second one, $V$, identifies the \emph{personalized} patterns specific to individual tasks. Thus, incorporation of two structures $U$ and $V$ could make the final model $W$ more robust and reliable.

To learn both correlative and personalized structures from multiple tasks, we decompose the weight matrix $Z$ into two components: \emph{correlative} matrix $U$ and \emph{personalized} matrix $V$, and define a new weight matrix,
\bqs
\label{decompositionW}
\Omega = \{W | W = \begin{bmatrix} U \\ V \end{bmatrix}, U\in\mathbb{R}^{d\times m}, V\in\mathbb{R}^{d\times m}\},
\eqs
where $\w^{i} = \begin{bmatrix} \u^{i} \\ \v^{i} \end{bmatrix}\in\mathbb{R}^{2d}$ is the $i$-th column of the weight matrix $W = [\w^{1},\ldots,\w^{m}]\in\mathbb{R}^{2d\times m}$.
Denoted by matrix $Z$ the summation of $U$ and $V$, we obtain
\bqs
\label{decomposeZ}
Z = U + V = \begin{bmatrix} I_d, I_d \end{bmatrix}W,
\eqs
where $I_d\in\mathbb{R}^{d\times d}$ is an identity matrix.
Given an instance $(\x^{i}_{t},y^{i}_{t})$ , the algorithm makes prediction based on both the correlative and personalized parameters,
\bqs
\hat{p}_t^i & = \z_t^{i}\cdot\x_t^i \overset{(\ref{decomposeZ})} {=} (\begin{bmatrix} I_d , I_d \end{bmatrix} \w^{i}_t) \cdot \x_t^i \\
 & = (\u_t^i+\v_t^i)^{\top}\x_t^i,
\eqs
with the corresponding loss function,
\bqs
\notag
f_t^i(\z^i_t) = f_t^i(\begin{bmatrix} I_d, I_d \end{bmatrix}\w^{i}_t) = [1 - y^{i}_t(\u^{i}_t+\v^{i}_t)^{\top}\x^{i}_t]_{+}.
\eqs
We thus can reformat the cumulative loss function with respect to $W$,
\bqs
\label{LossFunctionW}
L_t(W_t) = F_t(Z_t) \overset{(\ref{decomposeZ})} {=} F_t(\begin{bmatrix} I_d, I_d \end{bmatrix}W_t).
\eqs
We impose a regularizer on $U$ and $V$, respectively,
\bqs
\label{RegularizerW}
& r(W) =  g(\begin{bmatrix} I_d, I_d \end{bmatrix}W) \triangleq \lambda_1 r(U) + \lambda_2 r(V),
\eqs
where $\lambda_1$ and $\lambda_2$ are non-negative trade-off parameters.
Substituting Eq. (\ref{LossFunctionW}) and (\ref{RegularizerW}) into the regret $R_\phi$, it can be formatted as
\bqs
\label{RegretStaticW}
R_\phi \triangleq \sum_{t=1}^T [L_t(W_t) + r(W_t)] - \inf_{W\in\Omega}\sum_{t=1}^T [L_t(W) + r(W)],
\eqs
where $\phi_t(W) = L_t(W) + r(W)$ is a non-smooth convex function.
We next show how to achieve an optimal solution to the reformatted regret (\ref{RegretStaticW}).

\subsection{Online Task Relationship Learning}

Inspired by~\cite{bertsekas1999nonlinear}, we can solve the regret (\ref{RegretStaticW}) by a subgradient projection,
\bqs
\label{subgradient-projection}
\displaystyle{\mathop{\mathrm{argmin}}_{W\in\Omega}} \; \|W - (W_t - \eta\nabla\phi_t(W_t))\|_F,
\eqs
where $\eta > 0$ is the learning rate.
In the following lemma, we show that the problem (\ref{subgradient-projection}) can be turned into a linearized version of the proximal algorithm~\cite{rockafellar1976monotone}.
To do so, we first introduce a Bregman-like distance function~\cite{bregman1967relaxation},
\bqs
\notag
B_\psi(W,W_t) = \psi(W) - \psi(W_t) - \langle W-W_t,\nabla \psi(W_t)\rangle,
\eqs
where $\psi$ is a differentiable and convex function.
\begin{lemma}
\label{Lemma_Linearization}
Assume $\psi(\cdot)=\frac{1}{2}\|\cdot\|_F^2$, then using first-order Taylor expansion of $\phi_t(W_t)$, the algorithm (\ref{subgradient-projection}) is equivalent to a linearized form with a step-size parameter $\eta > 0$,
\bqs
\notag
\label{linearization_projection}
W_{t+1} = \displaystyle{\mathop{\mathrm{argmin}}_{W\in\Omega}} \; \langle \nabla\phi_t(W_t),W - W_t\rangle + \frac{1}{\eta}B_\psi(W,W_t).
\eqs
\end{lemma}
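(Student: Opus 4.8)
The plan is to show that the two minimization problems have \emph{identical} objective functions over $\Omega$, up to an additive $W$-independent constant and a positive multiplicative factor, neither of which affects the minimizer; hence they must share the same solution $W_{t+1}$. The only structural fact I will need is that the Bregman-like distance generated by $\psi(\cdot)=\frac{1}{2}\|\cdot\|_F^2$ collapses to a squared Frobenius distance, and the rest is a completion-of-square identity.

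First I would observe that minimizing the Frobenius norm in (\ref{subgradient-projection}) is equivalent to minimizing half of its square, since $r\mapsto\frac{1}{2}r^2$ is strictly increasing on $[0,\infty)$. Thus (\ref{subgradient-projection}) has the same minimizer as $\frac{1}{2}\|W-(W_t-\eta\nabla\phi_t(W_t))\|_F^2$ over $W\in\Omega$. Expanding this square and separating the $W$-dependent terms gives $\frac{1}{2}\|W-W_t\|_F^2+\eta\langle\nabla\phi_t(W_t),W-W_t\rangle+\frac{\eta^2}{2}\|\nabla\phi_t(W_t)\|_F^2$. The last summand does not depend on $W$, so it may be discarded, and dividing the remainder by the positive constant $\eta$ leaves $\frac{1}{2\eta}\|W-W_t\|_F^2+\langle\nabla\phi_t(W_t),W-W_t\rangle$, all without changing the minimizer.

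Next I would evaluate the Bregman term for the chosen $\psi$. Since $\nabla\psi(W_t)=W_t$, a direct expansion of $B_\psi(W,W_t)=\psi(W)-\psi(W_t)-\langle W-W_t,\nabla\psi(W_t)\rangle$ yields $B_\psi(W,W_t)=\frac{1}{2}\|W-W_t\|_F^2$. Substituting this identity into the expression from the previous step shows that (\ref{subgradient-projection}) is equivalent to minimizing $\langle\nabla\phi_t(W_t),W-W_t\rangle+\frac{1}{\eta}B_\psi(W,W_t)$ over $W\in\Omega$, which is precisely the asserted linearized update. This matches the reading of the right-hand side as a first-order Taylor surrogate of $\phi_t$ about $W_t$ regularized by the proximity term: replacing $\phi_t(W)$ by its linearization $\phi_t(W_t)+\langle\nabla\phi_t(W_t),W-W_t\rangle$ in the proximal step $\mathop{\mathrm{argmin}}_{W\in\Omega}\,\phi_t(W)+\frac{1}{\eta}B_\psi(W,W_t)$ and dropping the constant $\phi_t(W_t)$ produces the same functional.

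I do not expect a genuine analytical obstacle, as the statement reduces to one algebraic identity. The two points requiring care are, first, that $\phi_t=L_t+r$ is non-smooth (the hinge loss and the nuclear-norm and group-lasso regularizers are not differentiable everywhere), so $\nabla\phi_t(W_t)$ must be read as a fixed subgradient held constant throughout the update rather than a true gradient; and second, that the claim is an equivalence of \emph{minimizers} over the constraint set $\Omega$, which is legitimate only because the manipulations alter the objective by an additive $W$-independent constant and a strictly positive scaling. Verifying the single identity $B_\psi(W,W_t)=\frac{1}{2}\|W-W_t\|_F^2$ is therefore the crux that ties the projection viewpoint to the linearized-proximal viewpoint.
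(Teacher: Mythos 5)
Your proof is correct. The paper actually states Lemma~\ref{Lemma_Linearization} without providing a proof, so there is nothing to diverge from: your completion-of-squares argument --- replace the Frobenius norm by half its square, expand $\frac{1}{2}\|W-(W_t-\eta\nabla\phi_t(W_t))\|_F^2$, discard the $W$-independent term, rescale by $1/\eta$, and identify $B_\psi(W,W_t)=\frac{1}{2}\|W-W_t\|_F^2$ --- is exactly the standard derivation the authors are implicitly invoking. Your two cautionary remarks (reading $\nabla\phi_t(W_t)$ as a fixed subgradient of the non-smooth $\phi_t$, and that equivalence of minimizers is preserved only under additive constants and positive scalings) are both apt and worth keeping.
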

\noindent
Instead of balancing this trade-off individually for each of the multiple tasks, we balance it for all the tasks jointly.
However, the subgradient of a composite function, i.e. $\nabla\phi_t(W_t) = \nabla L_t(W_t) + \nabla r(W_t)$ cannot lead to a desirable effect, since we should constrain the projected gradient (i.e. $W_t - \eta\nabla\phi_t(W_t)$) into a restricted set.
To address this issue, we refine the optimization function by adding a regularizer on $W$,
\bqs
\label{optimal_solution}
W_{t+1} \triangleq \displaystyle{\mathop{\mathrm{argmin}}_{W\in\Omega}} \;
& \langle \nabla L_t(W_t),W - W_t\rangle \\
&  + \frac{1}{\eta}B_\psi(W,W_t) + r(W).
\eqs
Note that the formulation (\ref{optimal_solution}) is different from the Mirror Descent (MD) algorithm~\cite{beck2003mirror}, since we do not \emph{linearize} the regularizer $r(W)$.

Given that $W = \begin{bmatrix} U^{\top}, V^{\top} \end{bmatrix}^{\top}$, we show that the problem (\ref{optimal_solution}) can be presented with $U$ and $V$ in the lemma below.
\begin{lemma}
Assume that $\psi(W) = \frac{1}{2}\|W\|^2_F$ and $W = \begin{bmatrix} U \\ V \end{bmatrix}$, the problem (\ref{optimal_solution}) turns into an equivalent form in terms of $U$ and $V$,
\bqs
\label{OMTL-LRO_solution}
& (U_{t+1},V_{t+1}) \triangleq \displaystyle{\mathop{\mathrm{argmin}}_{U,V\in\Omega}} \; \lambda_1r(U) + \lambda_2r(V) \\
& + \frac{1}{2\eta_1}\|U - U_t\|^2_F + \frac{1}{2\eta_2}\|V - V_t\|^2_F \\
& + \langle\nabla_{U}L_t(U_t), U - U_t\rangle + \langle\nabla_{V}L_t(V_t), V - V_t\rangle,
\eqs
where the parameters $\eta_1$ and $\eta_2$ control previous learned knowledge retained by $U$ and $V$.
\end{lemma}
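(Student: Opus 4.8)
The plan is to reduce the joint update~(\ref{optimal_solution}) to the block-separated form~(\ref{OMTL-LRO_solution}) by combining the quadratic choice of $\psi$ with the stacked structure $W = [U^\top, V^\top]^\top$. First I would substitute $\psi(W)=\frac{1}{2}\|W\|_F^2$ into the Bregman-like distance. Since $\nabla\psi(W_t)=W_t$, the three terms collapse, $B_\psi(W,W_t)=\frac{1}{2}\|W\|_F^2-\frac{1}{2}\|W_t\|_F^2-\langle W-W_t,W_t\rangle=\frac{1}{2}\|W-W_t\|_F^2$, so the proximal penalty in~(\ref{optimal_solution}) becomes a plain squared Frobenius distance.

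Next I would decompose every term across the two blocks. Because $W-W_t$ stacks $U-U_t$ over $V-V_t$, the Frobenius norm is additive, $\|W-W_t\|_F^2=\|U-U_t\|_F^2+\|V-V_t\|_F^2$; the regularizer splits by~(\ref{RegularizerW}) as $r(W)=\lambda_1 r(U)+\lambda_2 r(V)$; and, writing the gradient in block form, the Frobenius inner product over the stacked matrices separates as $\langle\nabla L_t(W_t),W-W_t\rangle=\langle\nabla_U L_t(U_t),U-U_t\rangle+\langle\nabla_V L_t(V_t),V-V_t\rangle$. To legitimize the block gradients I would note that, by~(\ref{LossFunctionW}) and~(\ref{decomposeZ}), $L_t(W)=F_t([I_d,I_d]W)=F_t(U+V)$, so the chain rule with $Z=U+V$ gives $\nabla_U L_t=\nabla_V L_t=\nabla F_t(Z_t)$, which is exactly what the notation $\nabla_U L_t(U_t)$ and $\nabla_V L_t(V_t)$ denotes.

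Collecting these pieces, the objective of~(\ref{optimal_solution}) becomes a sum of a function of $U$ alone and a function of $V$ alone, with the single factor $\frac{1}{\eta}$ weighting both quadratic terms. Since $\Omega$ places no coupling constraint between the blocks (both $U,V\in\mathbb{R}^{d\times m}$ are free), the joint $\mathop{\mathrm{argmin}}$ decouples into two independent subproblems, which is precisely~(\ref{OMTL-LRO_solution}) with $\eta_1=\eta_2=\eta$.

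The one delicate point---and the step I would flag as the main obstacle to calling this a strict identity---is the passage from the shared rate $\eta$ to two independent rates $\eta_1,\eta_2$. This is not an algebraic equivalence but a deliberate generalization: I would obtain it by replacing the scalar proximal weight $\frac{1}{\eta}$ with a block-diagonal reweighting that scales the $U$- and $V$-blocks separately, allowing the \emph{correlative} and \emph{personalized} components to retain previously learned knowledge at different rates. Setting $\eta_1=\eta_2=\eta$ then recovers the exact equivalence with~(\ref{optimal_solution}), so I would state the lemma's ``equivalent form'' in that light.
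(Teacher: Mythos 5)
Your proof is correct and follows essentially the same route as the paper's: compute $B_\psi(W,W_t)=\frac{1}{2}\|W-W_t\|_F^2$, split this and the linearized gradient term across the $U$ and $V$ blocks, and substitute the regularizer decomposition (\ref{RegularizerW}) into (\ref{optimal_solution}). You go slightly beyond the paper in two useful ways---justifying the block gradients via the chain rule through $Z=U+V$, and explicitly flagging that the single step size $\eta$ in (\ref{optimal_solution}) only yields the two independent step sizes $\eta_1,\eta_2$ of (\ref{OMTL-LRO_solution}) via a deliberate block-wise reweighting (exact equivalence requires $\eta_1=\eta_2=\eta$), a point the paper's proof passes over silently.
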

\begin{proof}
Assume that $\psi(W) = \frac{1}{2}\|W\|^2_F$, we obtain:
\bqs\notag
\label{B_psi_UV}
B_\psi(W,W_t) & = \frac{1}{2}\|W\|^2_F - \frac{1}{2}\|W_t\|^2_F - \langle W - W_t, W_t\rangle \\
             & = \frac{1}{2}\|W - W_t\|^2_F \\
             & \overset{(\ref{decompositionW})} {=} \frac{1}{2}\|U - U_t\|^2_F + \frac{1}{2}\|V - V_t\|^2_F.
\eqs
The linearized gradient form can be rewritten as:
\bqs\notag
\label{Linear_UV}
& \langle \nabla L_t(W_t),W - W_t \rangle
  \overset{(\ref{decompositionW})} {=}  \langle \begin{bmatrix} \nabla_{U}L_t(U_t) \\ \nabla_{V}L_t(V_t) \end{bmatrix} , \begin{bmatrix} U - U_t \\ V - V_t \end{bmatrix} \rangle \\
& =  \langle\nabla_{U}L_t(U_t), U - U_t\rangle + \langle\nabla_{V}L_t(V_t), V - V_t\rangle.
\eqs
Substituting above two inferences and (\ref{RegularizerW}) into problem (\ref{optimal_solution}), we complete this proof.
\end{proof}
We next introduce the regularization $r(U)$ and $r(V)$, and then present how to optimize this non-smooth convex problem with a closed-form solution.

\subsection{Regularization}

As mentioned above, restricting task relatedness to a presumed structure via a single weight matrix~\cite{Saha} is too strict and not always plausible in practical applications.
To overcome this problem, we thus impose a regularizer on $U$ and $V$ as follows,
\bqs\label{regularization-UV}
r(U) = \|U\|_{*} & & r(V) = \|V\|_{2,1}.
\eqs
A nuclear norm~\cite{pong2010trace} is imposed on $U$ (i.e., $\|U\|_{*}$) to represent multiple tasks ($\u_i,i\in[1,m]$) by a small number (i.e. $n$) of the basis ($n\ll m$).
Intuitively, a model performing well on one task is likely to perform well on the similar tasks.
Thus, we expect that the best model can be shared across several related tasks.
However, the assumption that all tasks are correlated may not hold in real applications.
Thus, we impose the $l_{(2,1)}$-norm~\cite{kim2010tree} on $V$ (i.e., $\|V\|_{2,1}$), which favors a few non-zero columns in the matrix $V$ to capture the personalized tasks.

Note that our algorithm with the regularization terms above is able to detect personalized patterns, unlike the algorithms~\cite{AbernethyBR07,Agarwal,chen2012learning}.
Although prior work~\cite{gong2012robust} considers detecting the personalized task, it was designed for the offline setting, which is different from our algorithm since we learn the personalized pattern adaptively with online techniques.

\subsubsection{Optimization}

Although the composite problem (\ref{OMTL-LRO_solution}) can be solved by~\cite{vandenberghe1996semidefinite}, the composite function with linear constraints has not been investigated to solve the MTL problem.
We employ a projected gradient scheme~\cite{boyd2004convex,boyd2011distributed} to optimize this problem with both smooth and non-smooth terms.
Specifically, by substituting (\ref{regularization-UV}) into (\ref{OMTL-LRO_solution}) and omitting the terms unrelated to $U$ and $V$, the problem can be rewritten as a projected gradient schema,
\bqs
\notag
(U_{t+1},V_{t+1}) = \; & \displaystyle{\mathop{\mathrm{argmin}}_{U,V\in\Omega}} \;
\frac{1}{2\eta_1}\|U - \hat{U}_t\|_F^2 + \lambda_1\|U\|_* \\
& + \frac{1}{2\eta_2}\|V - \hat{V}_t\|_F^2 + \lambda_2\|V\|_{2,1}.
\eqs
where
\bqs
\notag
\hat{U}_t = U_t - \eta_1\nabla_U L_t(U_t), & & \hat{V}_t = V_t - \eta_2\nabla_V L_t(V_t).
\eqs
Due to the decomposability of the objective function above, the solution for $U$ and $V$ can be optimized separately,
\bqs
\label{OptimalU}
U_{t+1} = \displaystyle{\mathop{\mathrm{argmin}}_{U\in\Omega}} \; \frac{1}{2\eta_1}\|U - \hat{U}_t\|_F^2 + \lambda_1\|U\|_{*}.
\eqs
\bqs
\label{OptimalV}
V_{t+1} = \displaystyle{\mathop{\mathrm{argmin}}_{V\in\Omega}} \; \frac{1}{2\eta_2}\|V - \hat{V}_t\|_F^2 + \lambda_2\|V\|_{2,1}.
\eqs
This has two advantages: 1) there is a closed-form solution for each update; 2) the update for the $U$ and $V$ can be performed in parallel.

\noindent
\textbf{Computation of U:}
\noindent
Inspired by~\cite{boyd2004convex}, we show that the optimal solution to (\ref{OptimalU}) can be obtained via solving a simple convex optimization problem in the following theorem.
\begin{thm}\label{them_U}
Denote by the eigendecomposition of $\hat{U}_t = P\hat{\Sigma} Q^{\top}\in\mathbb{R}^{d\times m}$ where $r = $\textrm{rank}$(\hat{U}_t)$, $P\in\mathbb{R}^{d\times r}$, $Q\in\mathbb{R}^{m\times r}$, and $\hat{\Sigma}=$\textrm{diag}$(\hat{\sigma}_1,\ldots,\hat{\sigma}_r)\in\mathbb{R}^{r\times r}$.
Let $\{\sigma_i\}_{i=1}^r, \sigma_i \geq 0$ be the solution of the following problem,
\bqs
\label{OptimalSigma}
\min_{\{\sigma_i\}_{i=1}^r} & \frac{1}{2\eta_1}\sum_{i=1}^r(\sigma_i - \hat{\sigma}_i)^2 +  \lambda_1 \sum_{i=1}^r \sigma_i.
\eqs
It is easy to obtain the optimal solution for (\ref{OptimalSigma}): $\sigma_i^* = [\hat{\sigma}_i - \eta_1\lambda_1]_{+}$ for $i\in[1,r]$.
Assume that $\Sigma^{*} =$diag$(\sigma_1^*,\ldots,\sigma_r^*)\in\mathbb{R}^{r\times r}$, the optimal solution to Eq. (\ref{OptimalU}) is given by,
\bqs
\label{OptimalSolutionU}
U^* = P\Sigma^{*} Q^{\top},
\eqs
\end{thm}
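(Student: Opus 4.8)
The plan is to reduce the matrix-valued problem (\ref{OptimalU}) to the scalar problem (\ref{OptimalSigma}) by exploiting unitary invariance of both the Frobenius norm and the nuclear norm. First I would recall that for any matrix $U$ with singular values $\{\sigma_i\}$, the nuclear norm $\|U\|_* = \sum_i \sigma_i$ and the Frobenius norm depend only on singular values and on how the singular subspaces of $U$ align with those of $\hat{U}_t$. The key structural fact I would invoke is von Neumann's trace inequality: for matrices $U$ and $\hat{U}_t$ with ordered singular values $\{\sigma_i\}$ and $\{\hat{\sigma}_i\}$, one has $\langle U, \hat{U}_t\rangle = \mathrm{tr}(U^{\top}\hat{U}_t) \le \sum_i \sigma_i \hat{\sigma}_i$, with equality precisely when $U$ and $\hat{U}_t$ share the same left and right singular vectors $P, Q$.

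Next I would expand the Frobenius term as $\frac{1}{2\eta_1}\|U - \hat{U}_t\|_F^2 = \frac{1}{2\eta_1}\big(\|U\|_F^2 - 2\langle U,\hat{U}_t\rangle + \|\hat{U}_t\|_F^2\big)$ and note that $\|U\|_F^2 = \sum_i \sigma_i^2$ and $\|\hat{U}_t\|_F^2 = \sum_i \hat{\sigma}_i^2$ are fixed functions of the singular values alone. The only term coupling the singular vectors of $U$ to those of $\hat{U}_t$ is the inner product $\langle U, \hat{U}_t\rangle$, which enters with a negative sign. Therefore, for any fixed choice of singular values $\{\sigma_i\}$ of $U$, the objective in (\ref{OptimalU}) is minimized by maximizing $\langle U, \hat{U}_t\rangle$; by von Neumann's inequality this maximum is attained by choosing $U = P\,\mathrm{diag}(\sigma_1,\ldots,\sigma_r)\,Q^{\top}$, i.e. aligning the singular vectors of $U$ with $P$ and $Q$. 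Substituting this optimal alignment back, the objective collapses to $\frac{1}{2\eta_1}\sum_i (\sigma_i - \hat{\sigma}_i)^2 + \lambda_1\sum_i \sigma_i$ up to the constant $\frac{1}{2\eta_1}\|\hat{U}_t\|_F^2$, which is exactly the scalar problem (\ref{OptimalSigma}). The closed-form soft-thresholding solution $\sigma_i^* = [\hat{\sigma}_i - \eta_1\lambda_1]_+$ then follows by taking the derivative of each separable scalar term and handling the nonnegativity constraint $\sigma_i \ge 0$ (the $\max$ with zero encodes the KKT condition for the active constraint), yielding $U^* = P\Sigma^* Q^{\top}$.

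The main obstacle, and the step deserving the most care, is justifying the reduction to diagonal form rigorously — specifically, establishing that the optimal $U$ must share singular vectors with $\hat{U}_t$. The subtlety is that $U$ ranges over all matrices, not merely those with prescribed singular vectors, so one cannot simply assume a common SVD; the argument must show via von Neumann's trace inequality that any deviation from the shared-singular-vector alignment strictly decreases $\langle U,\hat{U}_t\rangle$ and hence strictly increases the objective. I would also note that the problem is strictly convex in $U$ (the Frobenius term is strongly convex and the nuclear norm is convex), so the minimizer is unique, which legitimizes identifying the single candidate $P\Sigma^*Q^{\top}$ as \emph{the} optimum rather than merely \emph{a} stationary point. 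The remaining scalar optimization is routine once separability is established.
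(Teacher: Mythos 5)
Your proof is correct, and it is worth noting that the paper itself gives no proof of this theorem: it states the result (``Inspired by~\cite{boyd2004convex}, we show that \ldots'') and moves directly on to the computation of $V$, so your argument supplies a missing justification rather than replicating one. What you write is the standard singular value thresholding argument: expand $\frac{1}{2\eta_1}\|U-\hat{U}_t\|_F^2$, observe that $\|U\|_F^2$ and $\|U\|_*$ depend on the singular values of $U$ alone, and use von Neumann's trace inequality $\langle U,\hat{U}_t\rangle \le \sum_i \sigma_i\hat{\sigma}_i$ to bound the objective of (\ref{OptimalU}) below by the separable scalar problem (\ref{OptimalSigma}) evaluated at the singular values of $U$; the candidate $P\Sigma^{*}Q^{\top}$ attains this lower bound, and strong convexity of the Frobenius term gives uniqueness. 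Two small remarks. First, the ``main obstacle'' you flag---showing that the optimizer \emph{must} share singular vectors with $\hat{U}_t$---is not actually needed: the one-sided inequality plus attainment already certifies $P\Sigma^{*}Q^{\top}$ as a global minimizer, and uniqueness then follows from strict convexity, so you never have to invoke the (somewhat delicate) equality case of von Neumann's inequality. Second, $U$ ranges over all of $\mathbb{R}^{d\times m}$ and may have more than $r$ nonzero singular values, so the scalar lower bound should formally be written with sums over $\min(d,m)$ indices, taking $\hat{\sigma}_i=0$ for $i>r$; those extra components are soft-thresholded to $0$, so the conclusion (\ref{OptimalSolutionU}) is unaffected. (Incidentally, the theorem's word ``eigendecomposition'' should read ``singular value decomposition,'' as your argument implicitly assumes.)
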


\noindent
\textbf{Computation of V:}
\noindent
We rewrite (\ref{OptimalV}) by solving an optimization problem for each column,
\bqs
\label{optimizationV1}
\min_{\v_i\in\mathbb{R}^d} \sum_{i=1}^m \frac{1}{2\eta_2}\|\v_i - \hat{\v}_i\|^2 + \lambda_2\sum_{i=1}^m\|\v_i\|_2.
\eqs
where $\hat{\v}_i\in\mathbb{R}^{d}$ denotes the $i$-th column of $\hat{V}_t= V_t - \eta_2\nabla_V L_t(V_t) = [\hat{\v}_1,\ldots,\hat{\v}_m]$.
The optimal operator problem (\ref{optimizationV1}) above admits a closed-form solution with time complexity of $O(dm)$~\cite{tibshirani1996regression},
\bqs
\label{OptimalSolutionV}
\v^{i}_{t+1} = \max(0,1 - \frac{\eta_2\lambda_2}{\|\hat{\v}^{i}_t\|_2})\hat{\v}^{i}_t, \quad \forall i\in[1,m].
\eqs
We observe that $\v^i$ would be retained if $\|\hat{\v}^{i}\|_2 > \eta_2\lambda_2$, otherwise, it decays to $\textbf{0}$.
Hence, we infer that only the personalized patterns among the tasks, which differs from the low-rank common structures and thus cannot be captured by $U$, would be retained in $V$.

The two quantities $V_t$ and $U_t$ can be updated according to a closed-form solution on each round $t$. A mistake-driven strategy is used to update the model.
Finally, this algorithm, which we call Robust Online Multi-tasks learning under Correlative and persOnalized structures with NuClear norm term (ROMCO-NuCl), is presented in Alg. \ref{ROMCO-NuCl}.

\begin{algorithm}[t]
\caption{ROMCO-NuCl} \label{ROMCO-NuCl}
\begin{algorithmic}[1]
\STATE {\bf Input}: a sequence of instances $(\x^{i}_t,y^{i}_t,), \forall t\in[1,T]$, and the parameter $\eta_1$, $\eta_2$, $\lambda_1$ and $\lambda_2$.
\STATE {\bf Initialize}: $\u_{0}^i = \mathbf{0}, \v_{0}^i = \mathbf{0}$ for $\forall i\in[1,m]$;
\FOR{$t=1,\ldots, T$}
    \FOR{$i=1,\ldots, m$}
        \STATE Receive instance pair ($\x^{i}_t$ $y^{i}_t$);
        \STATE Predict $\hat{y}_t^{i} = \sign[(\u^{i}_t + \v^{i}_t)\cdot\x_t^{i}]$;
        \STATE Compute the loss function $f_t^i(\w_t^{i})$;
    \ENDFOR
    \IF {$\exists i\in[1,m], f_t^i(\w_t^i) > 0$}
        \STATE Update $U_{t+1}$ with Eq.~(\ref{OptimalSolutionU});
        \STATE Update $V_{t+1}$ with Eq.~(\ref{OptimalSolutionV});
    \ELSE
        \STATE $U_{t+1} = U_{t}$ and $V_{t+1} = V_{t}$;
    \ENDIF
\ENDFOR
\STATE {\bf Output}: $\w^{i}_T$ for $i\in[1,m]$
\end{algorithmic}
\end{algorithm}

\subsection{Log-Determinant Function}

\vspace{-0.01in}
\begin{figure}
\centering
\caption{The rank, nuclear norm and log-determinant objectives in the scalar case}\label{Rank-Value-Comparison}
\subfigure {\includegraphics[width=0.52\textwidth,height=5.3cm]{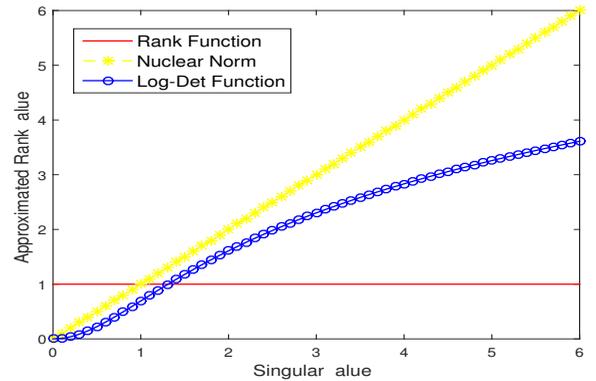}}
\end{figure}
\vspace{-0.0in}

While the nuclear norm has been theoretically proven to be the tightest convex approximation to the rank function, it is usually difficult to theoretically prove whether the nuclear norm near-optimally approximates the rank function, e.g., the incoherence property \cite{candes2009exact}\cite{candes2010power}.
In addition, the nuclear norm may not accurately approximate the rank function in practice, since the matrix "rank" regards all nonzero singular values to have equal contributions, i.e., regarding all positive singular values as ''1", as shown by the red line in Fig. \ref{Rank-Value-Comparison}; while the nuclear norm, as shown by the yellow star line in Fig. \ref{Rank-Value-Comparison}, treats the nonzero singular values differently, i.e., it simply adds all nonzero values together, thus the larger singular values make more contribution to the approximation.

To solve this issue, we introduce a log-determinant function as follows,
\begin{definition}
Let $\sigma_i$ ($i=1,2,\ldots,n$) be the singular values of the matrix $U$,
\bqs\label{log_determinant}\notag
r^{ld}(U) = \log\det\left(I + U^{\top}U\right) = \sum_{i=1}^n \log\left(1 + \sigma_i^2\right).
\eqs
\begin{itemize}
  \item When $\sigma_i = 0$, the term $\log(1+\sigma_i^2) = 0$, which is the same as the true rank function;
  \item When $0 < \sigma_i < 1$, $\log(1 + \sigma_i^2) < \sigma_i$, implying that those small singular values can be reduced further;
  \item For those large singular values $\sigma_i > 1$, $\log(1 + \sigma_i^2) \ll \sigma_i$, which is a significant reduce over large singular values.
\end{itemize}
\end{definition}
In this case, $r^{ld}(U)$ approximates the rank function better than the nuclear norm by significantly reducing the weights of large singular values, meanwhile regarding those very small singular values as noise, as presented with the blue circle line in Fig. \ref{Rank-Value-Comparison}.

\subsubsection{Optimal Solution}

Replacing the nuclear norm $\|\cdot\|_*$ with the log-determinant function $r^{ld}(U)$, the minimization of $U$ is reduced to the following problem:
\bqs\label{log_determinant_Objective}
U_{t+1} = \arg\min_{U} \frac{1}{2\eta_1}\|U - \hat{U}_t\|_F^2 + \lambda_1r^{ld}(U).
\eqs
To solve the objective function above, 
we show that the optimal solution could be obtained by solving the roots of a cubic equation in the following theorem.

\begin{thm}
Let $\hat{U}_t = P\hat{\Sigma} Q^{\top}\in\mathbb{R}^{d\times m}$ where $\hat{\Sigma}=$\textrm{diag}$(\hat{\sigma}_1,\ldots,\hat{\sigma}_r)\in\mathbb{R}^{r\times r}$ and $r = $\textrm{rank}$(\hat{U}_t)$. Let $\{\sigma_i\}_{i=1}^r, \sigma_i \geq 0$ be the solution of the following problem,
\bqs
\label{OptimalSigma_log}
\min_{\{\sigma_i\}_{i=1}^r} \sum_{i=1}^r\left[\frac{1}{2\eta_1}(\sigma_i - \hat{\sigma}_i)^2 + \lambda_1\log(1+\sigma_i^2)\right].
\eqs
Then the optimal solution to Eq. (\ref{log_determinant_Objective}), similar to Thm. \ref{them_U}, is given by
$U^* = P\Sigma^{*} Q^{\top}$, where $\Sigma^{*} =$diag$(\sigma_1^*,\ldots,\sigma_r^*)\in\mathbb{R}^{r\times r}$ and $\sigma_i^*$ is the optimal solution of (\ref{OptimalSigma_log}) .
To obtain the solution, the problem is reduced to solving the derivative of Eq. (\ref{OptimalSigma_log}) for each $\{\sigma_i\}_{i=1}^r \geq 0$ with $\rho = \eta_1\lambda_1$,
\bqs\label{derivate_objective}
\frac{1}{\rho}\sigma_i^3 - \frac{1}{\rho}\hat{\sigma}_i\sigma_i^2 + (\frac{1}{\rho} + 2)\sigma_i - \frac{1}{\rho}\hat{\sigma}_i = 0.
\eqs
\end{thm}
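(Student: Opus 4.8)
The plan is to reduce the matrix-valued problem (\ref{log_determinant_Objective}) to a separable scalar problem over the singular values, paralleling the argument behind Thm. \ref{them_U}, and then to extract the cubic (\ref{derivate_objective}) as the stationarity condition of each scalar subproblem. The only structural fact needed beyond elementary calculus is the von Neumann trace inequality, which controls the sole term in the objective that couples the singular vectors of $U$ to those of $\hat{U}_t$.

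First I would expand the quadratic term as $\|U - \hat{U}_t\|_F^2 = \|U\|_F^2 - 2\langle U, \hat{U}_t\rangle + \|\hat{U}_t\|_F^2$ and observe that $r^{ld}(U) = \sum_i \log(1+\sigma_i(U)^2)$ depends on $U$ only through its singular values $\sigma_i(U)$, as does $\|U\|_F^2 = \sum_i \sigma_i(U)^2$. The one remaining coupling term is $\langle U, \hat{U}_t\rangle = \mathrm{tr}(U^\top \hat{U}_t)$, which the von Neumann trace inequality bounds by $\langle U, \hat{U}_t\rangle \leq \sum_i \sigma_i(U)\,\hat{\sigma}_i$, with equality precisely when $U$ and $\hat{U}_t$ share the same left/right singular vectors $P,Q$ in matching order. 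Since this term enters with a negative sign, the objective is minimized by aligning the singular vectors of $U$ with $P$ and $Q$; writing $U = P\Sigma Q^\top$ then collapses (\ref{log_determinant_Objective}) into the separable scalar problem (\ref{OptimalSigma_log}), whose minimizer $\{\sigma_i^*\}$ yields $U^* = P\Sigma^* Q^\top$.

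Once the problem is separable, each coordinate reduces to minimizing $h(\sigma) = \frac{1}{2\eta_1}(\sigma - \hat{\sigma}_i)^2 + \lambda_1 \log(1+\sigma^2)$ over $\sigma \geq 0$. I would then set $h'(\sigma) = \frac{1}{\eta_1}(\sigma - \hat{\sigma}_i) + \lambda_1\frac{2\sigma}{1+\sigma^2} = 0$, clear the denominator by multiplying through by $\eta_1(1+\sigma^2)$, and substitute $\rho = \eta_1\lambda_1$; after dividing by $\rho$ this is exactly the cubic (\ref{derivate_objective}). A short boundary check confirms the minimizer is interior whenever $\hat{\sigma}_i > 0$, since $h'(0) = -\hat{\sigma}_i/\eta_1 < 0$, so the optimal $\sigma_i^*$ is indeed a root of (\ref{derivate_objective}).

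The main obstacle is the first step, the reduction to a spectral problem. Two points require care: (i) one must justify that singular values of $U$ beyond the rank $r$ of $\hat{U}_t$ are driven to zero at the optimum, since each such coordinate contributes $\frac{1}{2\eta_1}\sigma^2 + \lambda_1\log(1+\sigma^2) \geq 0$, minimized at $\sigma=0$, so that restricting to the $r$ directions of $P,Q$ loses nothing; and (ii) the equality condition in the von Neumann inequality must be invoked with both sets of singular values sorted consistently in decreasing order, so that the index-matched pairing $(\sigma_i - \hat{\sigma}_i)$ appearing in (\ref{OptimalSigma_log}) is the one attained. In contrast with Thm. \ref{them_U}, the scalar subproblem here is non-convex, so the stationarity equation is necessary but need not single out a unique global minimizer; identifying the correct root is precisely what the subsequent uniqueness analysis is meant to resolve.
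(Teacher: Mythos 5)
Your proposal is correct, and it actually supplies a justification that the paper itself omits: the theorem is stated without a proof, with only the appendix recording the root formulas for the cubic and Proposition~\ref{Proposition1} handling root selection. The calculus half of your argument checks out exactly --- multiplying $\frac{1}{\eta_1}(\sigma-\hat{\sigma}_i) + \frac{2\lambda_1\sigma}{1+\sigma^2}=0$ by $\eta_1(1+\sigma^2)$ gives $\sigma^3-\hat{\sigma}_i\sigma^2+(1+2\rho)\sigma-\hat{\sigma}_i=0$, which after dividing by $\rho$ is precisely Eq.~(\ref{derivate_objective}) --- and your boundary check $h'(0)=-\hat{\sigma}_i/\eta_1<0$ matches the $\Theta'(0)<0$ observation the paper uses later in Proposition~\ref{Proposition1}.

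The spectral reduction is the part the paper leaves implicit (it merely says ``similar to Thm.~\ref{them_U}'', and Thm.~\ref{them_U} is likewise unproved), and your von Neumann route is the right way to close it: $\|U\|_F^2$ and $r^{ld}(U)$ are functions of the singular values alone, the cross term $-\langle U,\hat{U}_t\rangle$ is minimized by aligning singular vectors, and the excess singular values beyond $r$ each contribute a nonnegative term minimized at zero. One small refinement worth making explicit: after invoking von Neumann you drop the decreasing-order constraint to get a fully separable lower bound $\sum_i\min_{\sigma\ge 0}h_i(\sigma)$, and this relaxation is harmless because the candidate $U^*=P\Sigma^*Q^{\top}$ attains that value directly (via $\|P\Sigma^*Q^{\top}-P\hat{\Sigma}Q^{\top}\|_F^2=\sum_i(\sigma_i^*-\hat{\sigma}_i)^2$) whether or not the $\sigma_i^*$ happen to come out sorted, so lower and upper bounds meet without needing monotonicity of the scalar minimizer in $\hat{\sigma}_i$. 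Your closing caveat --- that non-convexity makes stationarity necessary but not sufficient, so the theorem only reduces the problem to the cubic and the correct root must be selected separately --- is exactly the division of labor the paper adopts between this theorem and Proposition~\ref{Proposition1}.
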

\noindent

\begin{algorithm}[t]
\caption{ROMCO-LogD} \label{LogD_ROMCO}
\begin{algorithmic}[1]
\STATE {\bf Input}: a sequence of instances $(\x^{i}_t,y^{i}_t,), \forall t\in[1,T]$, and the parameter $\eta_1$, $\eta_2$, $\lambda_1$ and $\lambda_2$.
\STATE {\bf Initialize}: $\u_{0}^i = \mathbf{0}, \v_{0}^i = \mathbf{0}$ for $\forall i\in[1,m]$;
\FOR{$t=1,\ldots, T$}
    \FOR{$i=1,\ldots, m$}
        \STATE Receive instance pair ($\x^{i}_t$ $y^{i}_t$);
        \STATE Predict $\hat{y}_t^{i} = \sign[(\u^{i}_t + \v^{i}_t)\cdot\x_t^{i}]$;
        \STATE Compute the loss function $f_t^i(\w_t^{i})$;
    \ENDFOR
    \IF {$\exists i\in[1,m], f_t^i(\w_t^i) > 0$}
        \STATE Update $U_{t+1}$ by solving Eq. (\ref{derivate_objective});
        \STATE Update $V_{t+1}$ with Eq. (\ref{OptimalSolutionV});
    \ELSE
        \STATE $U_{t+1} = U_{t}$ and $V_{t+1} = V_{t}$;
    \ENDIF
\ENDFOR
\STATE {\bf Output}: $\w^{i}_T$ for $i\in[1,m]$
\end{algorithmic}
\end{algorithm}

In general, the equation (\ref{derivate_objective}) has three roots. The details of root computation is given in Appendix.
In addition, in the following proposition, we prove that a unique positive root for (\ref{derivate_objective}) can be obtained in a certain parameter setting.
\begin{proposition}\label{Proposition1}
Assume that $\Theta(\sigma_i) = \frac{1}{2\rho}(\sigma_i - \hat{\sigma}_i)^2 + \log(1+\sigma_i^2)$ with $\rho = \eta_1\lambda_1$, when $\hat{\sigma}_i = 0$, $ \sigma_i = 0$, under the condition that $\hat{\sigma}_i > 0$ and $\frac{1}{\rho} > 1/4$, $\sigma_i$ located in $(0, \hat{\sigma}_i)$ is the unique positive root in cubic Eq. (\ref{derivate_objective}).
\end{proposition}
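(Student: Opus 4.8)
The plan is to recast the proposition as a statement about the single-variable function $\Theta$, exploiting the separability of (\ref{OptimalSigma_log}): each summand depends on its own $\sigma_i$ and $\hat{\sigma}_i$ only, so it suffices to fix one index and show that $\Theta$ has a unique stationary point on $[0,\infty)$. Differentiating gives the first-order condition $\Theta'(\sigma_i)=\frac{1}{\rho}(\sigma_i-\hat{\sigma}_i)+\frac{2\sigma_i}{1+\sigma_i^2}=0$, and multiplying through by $1+\sigma_i^2$ recovers exactly the cubic (\ref{derivate_objective}); hence the roots of (\ref{derivate_objective}) are precisely the stationary points of $\Theta$. I would then establish uniqueness of the positive root via strict convexity of $\Theta$, and localize it in $(0,\hat{\sigma}_i)$ by an endpoint sign check.

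First I would dispose of the degenerate case $\hat{\sigma}_i=0$. Substituting into (\ref{derivate_objective}) yields $\sigma_i\big(\tfrac{1}{\rho}\sigma_i^2+\tfrac{1}{\rho}+2\big)=0$, and since $\rho>0$ the bracketed factor is strictly positive, forcing $\sigma_i=0$ as the only nonnegative root. This matches the claim and requires no further work.

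The core of the argument is the case $\hat{\sigma}_i>0$, where uniqueness follows from strict convexity of $\Theta$. Computing the second derivative,
\bqs\notag
\Theta''(\sigma_i) = \frac{1}{\rho} + \frac{2(1-\sigma_i^2)}{(1+\sigma_i^2)^2},
\eqs
so everything reduces to bounding the rational term from below. Setting $t=\sigma_i^2\ge 0$ and differentiating $g(t)=\frac{2(1-t)}{(1+t)^2}$, the unique critical point is $t=3$, at which $g$ attains its global minimum $-\tfrac14$ (with $g(0)=2$ and $g(t)\to 0^-$ as $t\to\infty$). Therefore $\Theta''(\sigma_i)\ge \tfrac{1}{\rho}-\tfrac14$, which is strictly positive exactly under the hypothesis $\tfrac{1}{\rho}>\tfrac14$. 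Thus $\Theta$ is strictly convex on $[0,\infty)$ and $\Theta'$ is strictly increasing, so there is at most one stationary point. I expect this sharp lower bound on $\Theta''$ to be the main obstacle: pinning down the minimizer $t=3$ and the value $-\tfrac14$ is precisely what produces the threshold $\tfrac14$ appearing in the hypothesis, so this computation is the crux rather than a routine aside.

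Finally I would localize the root by evaluating $\Theta'$ at the endpoints: $\Theta'(0)=-\hat{\sigma}_i/\rho<0$ while $\Theta'(\hat{\sigma}_i)=\frac{2\hat{\sigma}_i}{1+\hat{\sigma}_i^2}>0$. By the intermediate value theorem a root exists in $(0,\hat{\sigma}_i)$, and the strict monotonicity of $\Theta'$ established above guarantees it is the unique positive root of (\ref{derivate_objective}), completing the proof.
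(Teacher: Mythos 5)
Your proposal is correct and follows essentially the same route as the paper: compute $\Theta'$ and $\Theta''$, dispose of $\hat{\sigma}_i=0$ trivially, establish strict convexity under $\tfrac{1}{\rho}>\tfrac14$, and localize the root via the sign change of $\Theta'$ at $0$ and $\hat{\sigma}_i$. The only cosmetic difference is how the threshold $\tfrac14$ is extracted: you minimize $g(t)=\tfrac{2(1-t)}{(1+t)^2}$ at $t=3$ to get $\Theta''\ge\tfrac{1}{\rho}-\tfrac14$, whereas the paper asserts positivity of the quartic numerator of $\Theta''$ (equivalently, negativity of the discriminant of the associated quadratic in $\sigma_i^2$) — the two computations are interchangeable.
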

\begin{proof}
We need to minimize Eq. (\ref{OptimalSigma_log}) under the constraint of $\sigma_i \geq 0$. The derivative of $\Theta(\sigma_i)$ is
\bqs\notag
\Theta^{\prime}(\sigma_i) = \frac{2\sigma_i}{1 + \sigma_i^2} + \frac{1}{\rho}(\sigma_i - \hat{\sigma}_i),
\eqs
and the second derivative is
\bqs\notag
\Theta^{\prime\prime}(\sigma_i) = \frac{\frac{1}{\rho}\sigma_i^4 + (\frac{2}{\rho} - 2)\sigma_i^2 + (2 + \frac{1}{\rho})}{(1+\sigma_i^2)^2}.
\eqs
\emph{Case 1}: If $\hat{\sigma}_i = 0$, because $\sigma_i \geq 0$, we have $\Theta^{\prime}(\sigma_i) \geq 0$. That is, $\Theta(\sigma_i)$ is nondecreasing for any $\sigma_i \geq 0$ and strictly increasing with $\sigma_i > 0$. Thus, the minimizer of $\Theta(\sigma_i)$ is $\sigma_i^* = 0$.\\
\emph{Case 2}: If $\hat{\sigma}_i > 0$, then the roots exist only in a region of $(0,\hat{\sigma}_i)$ to let $\Theta^{\prime}(\sigma_i) = 0$, since $\Theta^{\prime}(\sigma_i)$ is monotonic with $\sigma_i$, and $\Theta^{\prime}(0) = -\frac{1}{\rho}\hat{\sigma}_i < 0$ while $\Theta^{\prime}(\hat{\sigma}_i) > 0$.
\begin{itemize}
  \item If $\frac{1}{\rho} > 1/4$, then $\Theta^{\prime\prime}(\sigma_i) > 0$ since $\frac{1}{\rho}\sigma_i^4 + (\frac{2}{\rho} - 2)\sigma_i^2 + (2 + \frac{1}{\rho}) > 0$. In this case, $\Theta(\sigma_i)$ is a strictly convex function with a unique root in $(0,\hat{\sigma}_i)$. Thus, the proposition is proven.
  \item If $0 < \frac{1}{\rho} \leq 1/4$, we determine the minimizer in the following way: Denote the set of positive roots of Eq. (\ref{derivate_objective}) by $\Omega_+$. By the first-order necessary optimality condition, the minimizer needs to be chosen from $\{0\}\cup\Omega_+$, that is, $\sigma^*_i = \arg\min_{\sigma_i\in\{0\}\cup\Omega_+}\Theta(\sigma_i)$.
\end{itemize}
\noindent
In our experiments, we initialize $\frac{1}{\rho} = 1$ and increase its value in each iteration. Therefore, when $\hat{\sigma}_i > 0$, the minimizer $\sigma^*_i\in(0, \hat{\sigma}_i)$ is the unique positive root of (\ref{derivate_objective}); when $\hat{\sigma}_i = 0$, $\sigma^*_i = 0$.
\end{proof}

We are ready to present the algorithm: RMOCO with Log-Determinant function for rank approximation, namely ROMCO-LogD, which also exploits a mistake-driven update rule. We summarize ROMCO-LogD in Alg. \ref{LogD_ROMCO}. To the best of our knowledge, this is the first work that proposes a log-determinant function to learn a low-rank structure of task relationship in the online MTL problem. In the next section, we will theoretically analyze the performance of the proposed online MTL algorithms ROMCO-NuCl/LogD.

\section{Theoretical Analysis}

We next evaluate the performance of our online algorithm ROMCO-NuCl/LogD in terms of the regret bound.
We first show the regret bound of the algorithm (\ref{optimal_solution}) and its equivalent form in the following lemma, which is essentially the same as Theorem 2 in the paper~\cite{duchi2010composite}:
\begin{lemma}
\label{General-Regret}
Let $\{W_t\}$ be updated according to (\ref{optimal_solution}).
Assume that $B_\psi(\cdot,\cdot)$ is $\alpha$-strongly convex w.r.t. a norm $\|\cdot\|_p$ and its convex conjugate $\|\cdot\|_q$ with $\frac{1}{p} + \frac{1}{q} = 1$, then for any $W^{*}\in\Omega$,
\bqs
\label{general_regret}\notag
R_\phi \leq \frac{1}{\eta}B_\psi(W^{*},W_1) + r(W_1) + \frac{\eta}{2\alpha}\sum_{t=1}^{T}\|\nabla L_t(W_t)\|_{q}^2.
\eqs
\end{lemma}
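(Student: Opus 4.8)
The plan is to recognize the update (\ref{optimal_solution}) as a composite-objective mirror-descent step and to run the standard telescoping regret argument, so the ingredients I need are the first-order optimality condition of the proximal update, convexity of $L_t$ and $r$, the three-point identity for $B_\psi$, and a Fenchel--Young pairing with strong convexity that converts the step displacement into a squared dual norm of the gradient. First I would write the optimality condition for $W_{t+1}$: since $W_{t+1}$ minimizes the convex objective over the convex set $\Omega$ and $\nabla_W B_\psi(W,W_t)=\nabla\psi(W)-\nabla\psi(W_t)$, there is a subgradient $g_{t+1}\in\partial r(W_{t+1})$ such that for every $W\in\Omega$
\[
\langle \nabla L_t(W_t)+\tfrac{1}{\eta}(\nabla\psi(W_{t+1})-\nabla\psi(W_t))+g_{t+1},\,W-W_{t+1}\rangle\ge 0.
\]
Instantiating at $W=W^*$ and invoking convexity, namely $L_t(W_t)-L_t(W^*)\le\langle\nabla L_t(W_t),W_t-W^*\rangle$ together with $r(W_{t+1})-r(W^*)\le\langle g_{t+1},W_{t+1}-W^*\rangle$, I would isolate the quantity $L_t(W_t)-L_t(W^*)+r(W_{t+1})-r(W^*)$ and rewrite the term $\tfrac{1}{\eta}\langle\nabla\psi(W_{t+1})-\nabla\psi(W_t),W^*-W_{t+1}\rangle$ via the three-point identity $\langle\nabla\psi(W_{t+1})-\nabla\psi(W_t),W^*-W_{t+1}\rangle=B_\psi(W^*,W_t)-B_\psi(W^*,W_{t+1})-B_\psi(W_{t+1},W_t)$.

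The residual cross term is $\langle\nabla L_t(W_t),W_t-W_{t+1}\rangle$, which is paired against $-\tfrac{1}{\eta}B_\psi(W_{t+1},W_t)$. By H\"older and then Fenchel--Young I would bound $\langle\nabla L_t(W_t),W_t-W_{t+1}\rangle\le\tfrac{\eta}{2\alpha}\|\nabla L_t(W_t)\|_q^2+\tfrac{\alpha}{2\eta}\|W_{t+1}-W_t\|_p^2$, while $\alpha$-strong convexity of $B_\psi$ w.r.t.\ $\|\cdot\|_p$ gives $B_\psi(W_{t+1},W_t)\ge\tfrac{\alpha}{2}\|W_{t+1}-W_t\|_p^2$. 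The two displacement terms cancel, leaving the per-step inequality
\[
L_t(W_t)-L_t(W^*)+r(W_{t+1})-r(W^*)\le\tfrac{1}{\eta}\big(B_\psi(W^*,W_t)-B_\psi(W^*,W_{t+1})\big)+\tfrac{\eta}{2\alpha}\|\nabla L_t(W_t)\|_q^2.
\]

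Summing over $t=1,\dots,T$, the Bregman terms telescope, and discarding $-\tfrac{1}{\eta}B_\psi(W^*,W_{T+1})\le 0$ leaves $\tfrac{1}{\eta}B_\psi(W^*,W_1)$ plus the accumulated gradient-norm term. The last step is the index reconciliation on the regularizer: the bound controls $\sum_{t=1}^T r(W_{t+1})$, whereas $R_\phi$ contains $\sum_{t=1}^T r(W_t)$. Using $\sum_{t=1}^T r(W_{t+1})=\sum_{t=1}^T r(W_t)-r(W_1)+r(W_{T+1})$ and the fact that the regularizers in (\ref{regularization-UV}) are nonnegative, so $r(W_{T+1})\ge 0$ can be dropped, I recover $R_\phi=\sum_{t=1}^T[\phi_t(W_t)-\phi_t(W^*)]$ with the extra additive term $r(W_1)$, yielding exactly the claimed bound.

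The main obstacle I anticipate is not the algebra but the careful handling of the two shifts that are easy to conflate: the subgradient in the optimality condition must be taken at $W_{t+1}$ (not $W_t$), which is what forces the regularizer to enter at index $t+1$, and this in turn is precisely what produces the stray $r(W_1)$ term after telescoping. I would double-check that $B_\psi(W^*,W_{T+1})\ge 0$ and $r(W_{T+1})\ge 0$ are both legitimately discardable so the final inequality is an upper bound, and confirm that with $\psi=\tfrac12\|\cdot\|_F^2$ the strong-convexity constant $\alpha$ and the dual norm $\|\cdot\|_q$ are instantiated consistently with the earlier lemmas.
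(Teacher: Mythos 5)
Your proposal is correct and is essentially the paper's own argument: the paper proves this lemma only by deferring to Theorem 2 of the cited composite-objective mirror-descent work of Duchi et al., and your derivation (prox-step optimality condition, convexity with the subgradient taken at $W_{t+1}$, three-point identity, Fenchel--Young against the $\alpha$-strong convexity of $B_\psi$, telescoping, and the index shift that produces the additive $r(W_1)$) is precisely the proof of that cited theorem. The bookkeeping you flag at the end, dropping $B_\psi(W^*,W_{T+1})\ge 0$ and $r(W_{T+1})\ge 0$, is legitimate for all regularizers used in the paper, so no gap remains.
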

\begin{Remark}
We show that the above regret is $O(\sqrt{T})$ with respect to the best linear model in hindsight.
Suppose that the functions $F_t$ are Lipschitz continuous, then $\exists G_{q}$ such that $\max_t\|\nabla L_t(W_t)\|_{q} \leq G_{q}$.
Then we obtain:
\bqs
\notag
R_\phi \leq \frac{1}{\eta}B_\psi(W^{*}, W_1) + r(W_1) + \frac{T\eta}{2\alpha}G_{q}^2.
\eqs
We also assume that $r(W_1) = 0$.
Then by setting $\eta = \sqrt{2\alpha B_\psi(W^{*},W_1)}/(\sqrt{T}G_{q})$, we have $R_\phi \leq \sqrt{2TB_\psi(W^{*},W_1)}G_{q}/\sqrt{\alpha}$.
Given that $G_{q}$ is constant, setting $\eta \propto 1/\sqrt{T}$, we have $R_\phi = O(\sqrt{T})$.
\end{Remark}

\begin{lemma}
\label{ProjectionEqua}
The general optimization problem (\ref{optimal_solution}) is equivalent to the two step process of setting:
\bqs
\notag
& \tilde{W}_t =  \displaystyle{\mathop{\mathrm{argmin}}_{W\in\Omega}} \; \frac{1}{\eta}B_\psi(W,W_t) + \langle\nabla L_t(W_t), W \rangle, \\
& W_{t+1} =  \displaystyle{\mathop{\mathrm{argmin}}_{W\in\Omega}} \; \frac{1}{\eta}\{B_\psi(W,\tilde{W}_t) + r(W) \}.
\eqs
\end{lemma}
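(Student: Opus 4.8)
The plan is to show that the two displayed updates produce the same point as problem (\ref{optimal_solution}) by eliminating the auxiliary variable $\tilde{W}_t$ via the optimality condition of the first step and then matching the two objective functions up to a constant. I would not attempt to solve either subproblem explicitly; instead I would argue that, as functions of $W$ over $\Omega$, the second-step objective and the objective of (\ref{optimal_solution}) differ only by a term independent of $W$, so they share the same minimizer.

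First I would characterize $\tilde{W}_t$. The first-step objective $\frac{1}{\eta}B_\psi(W,W_t) + \langle \nabla L_t(W_t), W\rangle$ is differentiable and strongly convex, so its unique minimizer satisfies the stationarity condition. Using $\nabla_W B_\psi(W,W_t) = \nabla\psi(W) - \nabla\psi(W_t)$, this reads
\[
\frac{1}{\eta}\bigl(\nabla\psi(\tilde{W}_t) - \nabla\psi(W_t)\bigr) + \nabla L_t(W_t) = 0,
\]
that is, $\nabla\psi(\tilde{W}_t) = \nabla\psi(W_t) - \eta\,\nabla L_t(W_t)$. When $\psi = \frac{1}{2}\|\cdot\|_F^2$ this collapses to the plain gradient step $\tilde{W}_t = W_t - \eta\nabla L_t(W_t)$, which is precisely the quantity hidden inside $\hat{U}_t$ and $\hat{V}_t$, so the second step is exactly the proximal map solved in closed form earlier.

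Next I would substitute this identity into the second-step objective. Expanding $B_\psi(W,\tilde{W}_t)$ by its definition and discarding every term that does not depend on $W$, the second step reduces to minimizing
\[
\frac{1}{\eta}\psi(W) - \frac{1}{\eta}\langle W, \nabla\psi(\tilde{W}_t)\rangle + r(W),
\]
and replacing $\nabla\psi(\tilde{W}_t)$ by $\nabla\psi(W_t) - \eta\nabla L_t(W_t)$ turns this into $\frac{1}{\eta}\psi(W) - \frac{1}{\eta}\langle W, \nabla\psi(W_t)\rangle + \langle W, \nabla L_t(W_t)\rangle + r(W)$. On the other hand, expanding $B_\psi(W,W_t)$ inside (\ref{optimal_solution}) and again dropping $W$-independent terms yields the identical expression. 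Since the two objectives agree up to an additive constant over $\Omega$, they attain their infimum at the same $W$, which establishes the claimed equivalence.

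The main obstacle is bookkeeping rather than any deep step. One must track the constant offsets carefully and, in particular, reconcile the placement of the factor $1/\eta$ so that the \emph{ratio} between the Bregman term and the regularizer $r(W)$ is the same in both formulations: an $\arg\min$ is invariant under a positive global rescaling of the objective but not under rescaling $r(W)$ relative to $B_\psi$, so the factor $1/\eta$ should multiply only the Bregman term in the second step. A secondary point to address is that the first-step stationarity condition is written as a plain gradient equation; if that minimization is genuinely constrained to $\Omega$ it should be phrased as a variational inequality, but the structural constraint is enforced through $r(W)$ in the second step, so the first step may be taken over the ambient space and the closed-form relation $\nabla\psi(\tilde{W}_t) = \nabla\psi(W_t) - \eta\nabla L_t(W_t)$ applies exactly.
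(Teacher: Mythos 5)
Your proof is correct and follows essentially the same route as the paper's: both arguments eliminate $\tilde{W}_t$ through the stationarity identity $\nabla\psi(\tilde{W}_t)=\nabla\psi(W_t)-\eta\nabla L_t(W_t)$, the only difference being that the paper then chains the (sub)gradient optimality conditions of the two formulations while you compare the two objectives directly up to a $W$-independent constant, which is an equally valid and slightly more self-contained way to conclude that the argmins coincide. Your remark on the placement of $1/\eta$ is also a correct catch: as displayed in the lemma the factor brackets $r(W)$ as well, which would not match problem (\ref{optimal_solution}) unless $\eta=1$, and the paper's own proof implicitly reads the second step with $1/\eta$ multiplying only the Bregman term, exactly as you require.
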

\begin{proof}
The optimal solution to the first step satisfies
$\nabla\psi(\tilde{W}_t) - \nabla\psi(W_t) + \eta\nabla L_t(W_t) = 0$, so that
\bqs
\label{FirstStep}
\nabla\psi(\tilde{W}_t) = \nabla\psi(W_t) - \eta\nabla L_t(W_t).
\eqs
Then look at the optimal solution for the second step.
For some $r'(W_{t+1})\in \partial r(W_{t+1})$, we have
\bqs
\label{SecondStep}
\nabla\psi(W_{t+1}) - \nabla\psi(\tilde{W}_t) + \eta r'(W_{t+1}) = 0.
\eqs
Substituting Eq.~(\ref{FirstStep}) into Eq.~(\ref{SecondStep}), we obtain
\bqs
\notag
\frac{1}{\eta}(\nabla\psi(W_{t+1}) - \nabla\psi(W_t)) + \nabla L_t(W_t) + r'(W_{t+1}) = 0,
\eqs
which satisfies the optimal solution to the one-step update of (\ref{optimal_solution}).
\end{proof}

We next show that ROMCO can achieve a sub-linear regret in the following theory.
\begin{thm}
\label{Strong-Regret}
The algorithm ROMCO (Alg.~\ref{ROMCO-NuCl} and Alg. \ref{LogD_ROMCO}) runs over a sequence of instances for each of the $m$ tasks.
Assume that $r(0) = 0$, i.e., $W_1  = 0$ and $\max_t \|\nabla L_t(W_t)\| \leq G_2$, $U, V \in \mathbb{R}^{d\times m}$, then the following inequality holds for all $W^{*}\in\Omega$,
\bqs
\label{boundAlgorithm}\notag
R_\phi  \leq \frac{1}{2\eta}\|W^{*}\|^2_F + T\eta G_{2}^2  = O(G_2\|W^{*}\|\sqrt{T}).
\eqs
\end{thm}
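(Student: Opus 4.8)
The plan is to obtain Theorem~\ref{Strong-Regret} as a direct specialization of the general regret bound in Lemma~\ref{General-Regret}. First I would fix the Bregman function to $\psi(\cdot) = \frac{1}{2}\|\cdot\|_F^2$, exactly as in the preceding lemmas, so that $B_\psi(W, W_t) = \frac{1}{2}\|W - W_t\|_F^2$. For this choice the Bregman divergence is $1$-strongly convex with respect to the Frobenius norm, which is self-dual; I would therefore instantiate Lemma~\ref{General-Regret} with $\alpha = 1$ and the self-dual pair $p = q = 2$. This eliminates the abstract constants and leaves a bound expressed purely in terms of $\eta$, $\|W^{*}\|_F$, and the gradient norms $\|\nabla L_t(W_t)\|$.

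Next I would substitute the stated hypotheses. Since $W_1 = 0$, the leading term collapses to $B_\psi(W^{*}, W_1) = \frac{1}{2}\|W^{*}\|_F^2$, and the assumption $r(0) = 0$ removes the $r(W_1)$ term. For the gradient sum, the Lipschitz/bounded-subgradient hypothesis $\max_t \|\nabla L_t(W_t)\| \le G_2$ gives $\sum_{t=1}^{T} \|\nabla L_t(W_t)\|^2 \le T G_2^2$. Feeding these three facts into Lemma~\ref{General-Regret} yields the explicit intermediate bound $R_\phi \le \frac{1}{2\eta}\|W^{*}\|_F^2 + \eta T G_2^2$, which is precisely the first inequality in the statement. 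I would then expose the sub-linear rate by balancing the two terms: choosing $\eta \propto \|W^{*}\|_F/(G_2\sqrt{T})$, concretely $\eta = \|W^{*}\|_F/(G_2\sqrt{2T})$, makes both terms of order $G_2\|W^{*}\|_F\sqrt{T}$, giving $R_\phi = O(G_2\|W^{*}\|\sqrt{T})$ and hence a vanishing average regret.

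The algebra above is routine; the points that genuinely need care are threefold. (i) I must justify that the ROMCO updates in Alg.~\ref{ROMCO-NuCl} and Alg.~\ref{LogD_ROMCO} are honest instances of the update~(\ref{optimal_solution}) to which Lemma~\ref{General-Regret} applies. For the nuclear-norm version this is immediate because $r$ is convex, but for the log-determinant version $r^{ld}$ is non-convex, so I would either argue the bound relative to the convex regularizer that $r^{ld}$ refines, or state the guarantee with the corresponding caveat. (ii) The mistake-driven rule freezes the model ($W_{t+1} = W_t$) on rounds with zero hinge loss; I would note that a zero subgradient is admissible there, so those rounds contribute nothing to the gradient sum and do not disturb the telescoping underlying Lemma~\ref{General-Regret}. (iii) With $\alpha = 1$ the lemma delivers the coefficient $\frac{\eta}{2}TG_2^2$, so matching the stated $\eta T G_2^2$ only requires absorbing a harmless factor of two, which leaves the $O(\sqrt{T})$ conclusion unchanged.

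The main obstacle is really (i): reconciling the non-convex log-determinant objective with the convexity assumption that the general regret lemma rests on. Everything else is a clean substitution of $\psi = \frac{1}{2}\|\cdot\|_F^2$, $W_1 = 0$, and the uniform gradient bound into Lemma~\ref{General-Regret}, followed by the standard step-size optimization.
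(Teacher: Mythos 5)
Your proposal follows essentially the same route as the paper: specialize Lemma~\ref{General-Regret} with $\psi(\cdot)=\frac{1}{2}\|\cdot\|_F^2$ (so $\alpha=1$, $p=q=2$), identify the split updates for $U$ and $V$ with the one-step update~(\ref{optimal_solution}) via Lemma~\ref{ProjectionEqua}, substitute $W_1=0$ and the uniform gradient bound to get $R_\phi \leq \frac{1}{2\eta}\|W^*\|_F^2 + \frac{T\eta}{2}G_2^2$, and balance with $\eta \propto \|W^*\|/(G_2\sqrt{T})$. The caveats you flag are in fact points the paper's own proof passes over silently --- in particular, Lemma~\ref{General-Regret} rests on convexity of the regularizer, which the non-convex $r^{ld}$ in Alg.~\ref{LogD_ROMCO} does not satisfy, and the mistake-driven freezing must be read as selecting a zero subgradient on no-loss rounds --- so your version is, if anything, the more careful one.
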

\begin{proof}
Let $\psi(\cdot)=\frac{1}{2}\|\cdot\|^2_F$, according to Lemma \ref{ProjectionEqua}, the solutions in subgradient projection (\ref{OptimalU}) and~\eqref{OptimalV} are equivalent to the one in form of the general optimization (\ref{optimal_solution}).
Based on Lemma \ref{General-Regret}, for any $U^{*}\in\Omega$,
\bqs
\notag
R_\phi \leq \frac{1}{\eta}B_\psi(W^{*}, W_1) + r(W_1) + \frac{\eta}{2\alpha}\sum_{t=1}^{T}\|\nabla L_t(W_t)\|^2_q.
\eqs
Because that $\psi(\cdot)=\frac{1}{2}\|\cdot\|^2_F$ (i.e., $p=q=2$), $\|\nabla L_t(W_t)\| \leq G_2$.
Assuming $W_1 = 0$, we obtain $r(W_1) = 0$ and $B_\psi(W^{*}, W_1)=\frac{1}{2}\|W^{*}\|_F^2$.
Thus,
\bqs
\notag
R_\phi \leq \frac{1}{2\eta}\|W^{*}\|^2_F + \frac{T\eta}{2}G_{2}^2.
\eqs
By setting $\eta = \frac{\|W^*\|}{\sqrt{T}G_2}$, we have $R_\phi = O(G_2\|W^{*}\|\sqrt{T})$.
\end{proof}

\section{Experimental Results}

We evaluate the performance of our algorithm on three real-world datasets.
We start by introducing the experimental data and benchmark setup, followed by discussions on the results of three practical applications.

\subsection{Data and Benchmark Setup}

\begin{table}[t]
\centering
\caption{Statistics of three datasets}
\label{statistic_data}
\scriptsize
\begin{tabular}[2.1\textwidth]{|c|c|c|c|} \hline
             & Spam Email & MHC-I  & EachMovie \\ \hline
\#Tasks      & 4          & 12     & 30        \\ \hline
\#Sample     & 7068       & 18664  & 6000      \\ \hline
\#Dimesion   & 1458       & 400    & 1783      \\ \hline
\#MaxSample  & 4129       & 3793   & 200       \\ \hline
\#MinSample  & 710        & 415    & 200       \\ \hline
\end{tabular}
\end{table}

\subsubsection{Experimental Datasets and Baseline}

We used three real-world datasets to evaluate our algorithm: \emph{Spam Email}\footnote{http://labs-repos.iit.demokritos.gr/skel/i-config/},
\emph{Human MHC-I}\footnote{http://web.cs.iastate.edu/~honavar/ailab/} and \emph{EachMovie} \footnote{http://goldberg.berkeley.edu/jester-data/}.
Table~\ref{statistic_data} summarizes the statistics of three datasets.
Each of the datasets can be converted to a list of binary-labeled instances, on which binary classifiers could be built for the applications of the three real-world scenarios: Personalized Spam Email Filtering, MHC-I Binding Prediction, and Movie Recommender System.

We compared two versions of the ROMCO algorithms with two batch learning methods: multi-task feature learning  (\emph{MTFL})~\cite{ArgyriouEP06} and trace-norm regularized multi-task learning (\emph{TRML}) \cite{zhou2012mutal}, as well as six online learning algorithms: online multi-task learning (\emph{OMTL}) \cite{DekelLS06}, online passive-aggressive algorithm (\emph{PA}) \cite{Crammer}, confidence-weighted online collaborative multi-task learning (\emph{CW-COL}) \cite{LiCHLJ11} and three recently proposed online multi-task learning algorithms: \emph{OMTLVON}, \emph{OMTLLOG} \cite{Saha} and \emph{OSMTL-e} \cite{murugesan2016adaptive}. Due to the expensive computational cost in the batch models, we modified the setting of MTFL and TRML to handle online data by periodically retraining them after observing 100 samples. All parameters for MTFL and TRML were set by default values. To further examine the effectiveness of the PA algorithm, we deployed two variations of this algorithm as described below:
\emph{PA-Global} learns a single classification model from data of all tasks;
\emph{PA-Unique} trains a personalized classifier for each task using its own data.
The parameter C was set to 1 for all related baselines and the ROMCO algorithms. Other parameters for CW-COL, OMTLVON(OMTLLOG) and OSMTL-e were tuned with a grid search on a held-out random shuffle.
The four parameters, $\eta_1$, $\eta_2$, $\lambda_1$ and $\lambda_2$ for ROMCO-NuCl/LogD, were tuned by the grid search $\{10^{-6}$,\ldots,$10^{0}\}$ on a held-out random shuffle.

\begin{table*}[t]
\centering
\caption{Cumulative error rate (\%) and F1-measure (\%) with their standard deviation in the parenthesis on the Spam Email Dataset}
\label{email_comparison_result}
\scriptsize
\begin{tabular}[2.1\textwidth] {|c|c|c|c|c|c|c|c|c|}
\hline
\multirow{2}{*}{Algorithm} & \multicolumn{2}{|c|}{\emph{User1}} & \multicolumn{2}{|c|}{\emph{User2}}  &  \multicolumn{2}{|c|}{\emph{User3}} &  \multicolumn{2}{|c|}{\emph{User4}} \\
\cline{2-9}
& Error Rate	& \tabincell{c}{Legit F1}  & Error Rate & \tabincell{c}{Legit F1} &  Error Rate & \tabincell{c}{Legit F1} & Error Rate & \tabincell{c}{Legit F1} \\
\hline\hline
MTFL				& 13.16(1.21)	& 88.61(1.06)	& 8.72(0.58)	& 94.67(0.38)  & 14.84(0.91)& 86.70(0.88) & 16.87(0.78) & 83.59(0.77)   \\ \hline
TRML				& 17.71(0.99)	& 84.61(0.88)	& 12.45(0.94)	& 92.24(0.65)  & 13.89(0.73)& 87.57(0.66) & 20.78(1.08) & 79.67(1.09)   \\ \hline
PA-Global           & 6.15(0.45)    & 94.54(0.41)   & 8.59(0.82)	& 94.72(0.51)  & 4.12(0.30)	& 96.33(0.27) & 9.75(0.61)  & 90.20(0.64)  \\ \hline
PA-Unqiue           & 5.05(0.49)    & 95.51(0.44)   & 8.28(0.85)	& 94.91(0.52)  & 3.67(0.36)	& 96.73(0.32) & 8.43(0.86)  & 91.52(0.89)  \\ \hline
CW-COL			    & 5.10(0.61)	& 95.45(0.55)	&  \textbf{6.58(0.60)}	&  \textbf{95.91(0.38)}  & 4.14(0.12)	& 96.29(0.10) & 7.95(0.73)  & 92.08(0.73)  \\ \hline 
OMTL				& 5.00(0.48)	& 95.55(0.43)	& 8.01(0.77)	& 95.07(0.48)  & 3.55(0.29)	& 96.84(0.26) & 8.24(0.73)  & 91.71(0.75)  \\ \hline
OSMTL-e             & 7.34(0.69)    & 93.46(0.59)   & 10.25(0.89)   & 93.61(0.57)  & 5.86(0.69) & 94.74(0.61) & 10.42(1.18) & 89.68(1.07)   \\ \hline
OMTLVON             & 18.88(5.70)  & 85.55(3.70)  & 19.76(0.10)   &  89.04(0.06)  & 3.54(0.30)  & 96.85(0.29) & 10.54(2.47) & 89.52(2.97)   \\ \hline
OMTLLOG             & 4.81(0.36)   & 95.73(0.32)  & 7.58(0.65)   &  95.36(0.39)  & 2.91(0.18)   & 97.41(0.16) & 7.16(0.53)  & 92.87(0.51)   \\ \hline \hline
\textbf{ROMCO-NuCl} & 4.12(0.50)    & 96.34(0.44)   & 7.06(0.49)  & 95.68(0.30)	& 2.87(0.42)    & 97.43(0.38) & 6.85(0.68)  & 93.23(0.66)   \\ \hline
\textbf{ROMCO-LogD} & \textbf{4.00(0.42)}    & \textbf{96.45(0.37)}   & 7.31(0.45)  & 95.55(0.27)	& \textbf{2.74(0.16)}  & \textbf{97.56(0.14)} &  \textbf{6.68(0.44)} &  \textbf{93.40(0.43)} \\ \hline
\end{tabular}
\end{table*}

\vspace{-0.05in}
\subsubsection{Evaluation Metric}
We evaluated the performance of the aforementioned algorithms by two metrics:
1) Cumulative error rate: the ratio of predicted errors over a sequence of instances.
It reflects the prediction accuracy of online learners.
2) F1-measure: the harmonic mean of precision and recall.
It is suitable for evaluating the learner's performance on class-imbalanced datasets.
We followed the method of \cite{yang2015aggressive} by randomly shuffling the ordering of samples for each dataset and repeating the experiment 10 times with new shuffles. The average results and its standard deviation are reported below.

\begin{table}[t]
\centering
\caption{Run-time (in seconds) for each algorithm}
\label{running_time}
\scriptsize
\begin{tabular}[2.1\textwidth]{|c|c|c|c|c|} \hline
Algorithm & Spam Email & MHC-I & EachMovie \\ \hline
TRML        & 73.553  & 361.42  & 391.218 \\ \hline
MTFL        & 78.012  & 198.90  & 302.170 \\ \hline
PA-Global   & 0.423    & 1.79    & 23.337   \\ \hline
PA-Unique   & 0.340    & 1.53    & 26.681   \\ \hline
CW-COL     & 0.86    & 4.35    & 31.002   \\ \hline
OMTL        & 26.428   & 40.314   & 85.105   \\ \hline
OSMTL-e     & 0.360    & 2.327   & 11.43   \\ \hline
OMTLVON        & 1.230  & 1.785   & 21.586   \\ \hline
OMTLLOG        & 1.145   & 1.371  & 20.232  \\ \hline \hline
\textbf{ROMCO-NuCl}      & 11.49   & 4.88   & 33.235   \\ \hline
\textbf{ROMCO-LogD}      & 10.59   & 5.352   & 28.716   \\ \hline
\end{tabular}
\label{tab3}
\end{table}
\vspace{-0.01in}

\subsection{Spam Email Filtering}
\vspace{-0.05in}

\begin{figure*}[t]
\centering
\caption{Cumulative error rate on the Email Spam dataset along the entire online learning process}
\label{EmailSpam-figure}
\subfigure {\includegraphics[width=0.2435\textwidth,height=4.4cm]{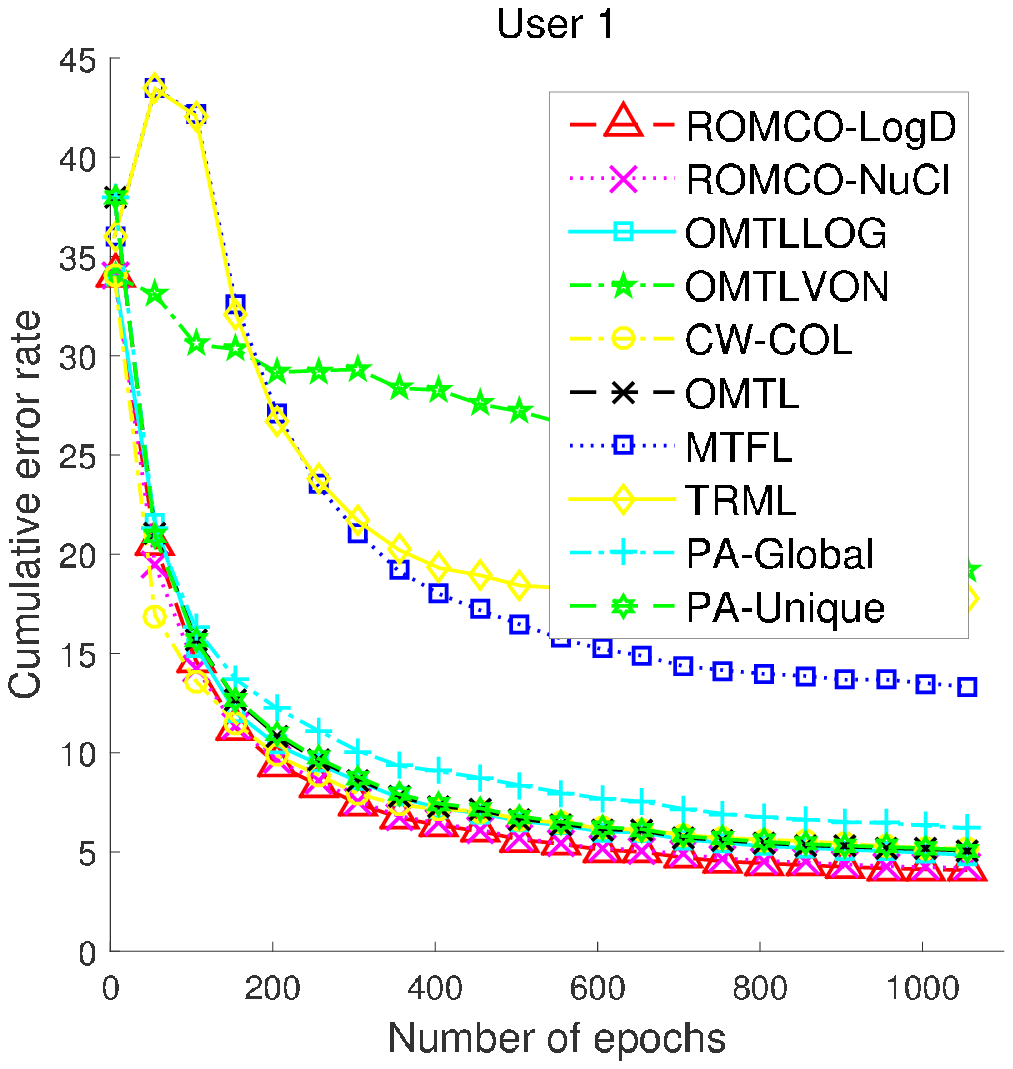}}
\subfigure {\includegraphics[width=0.2435\textwidth,height=4.4cm]{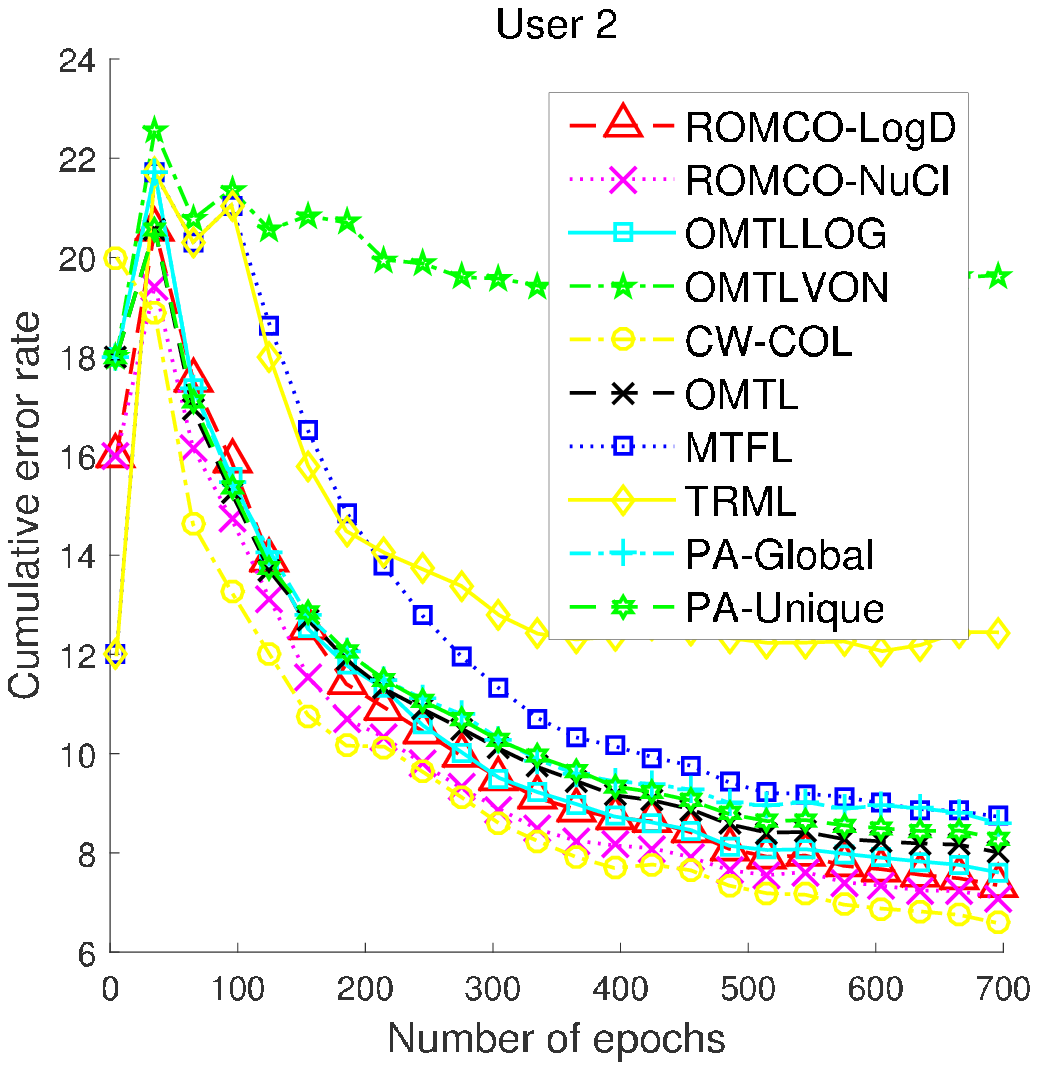}}
\hfil
\subfigure {\includegraphics[width=0.2435\textwidth,height=4.4cm]{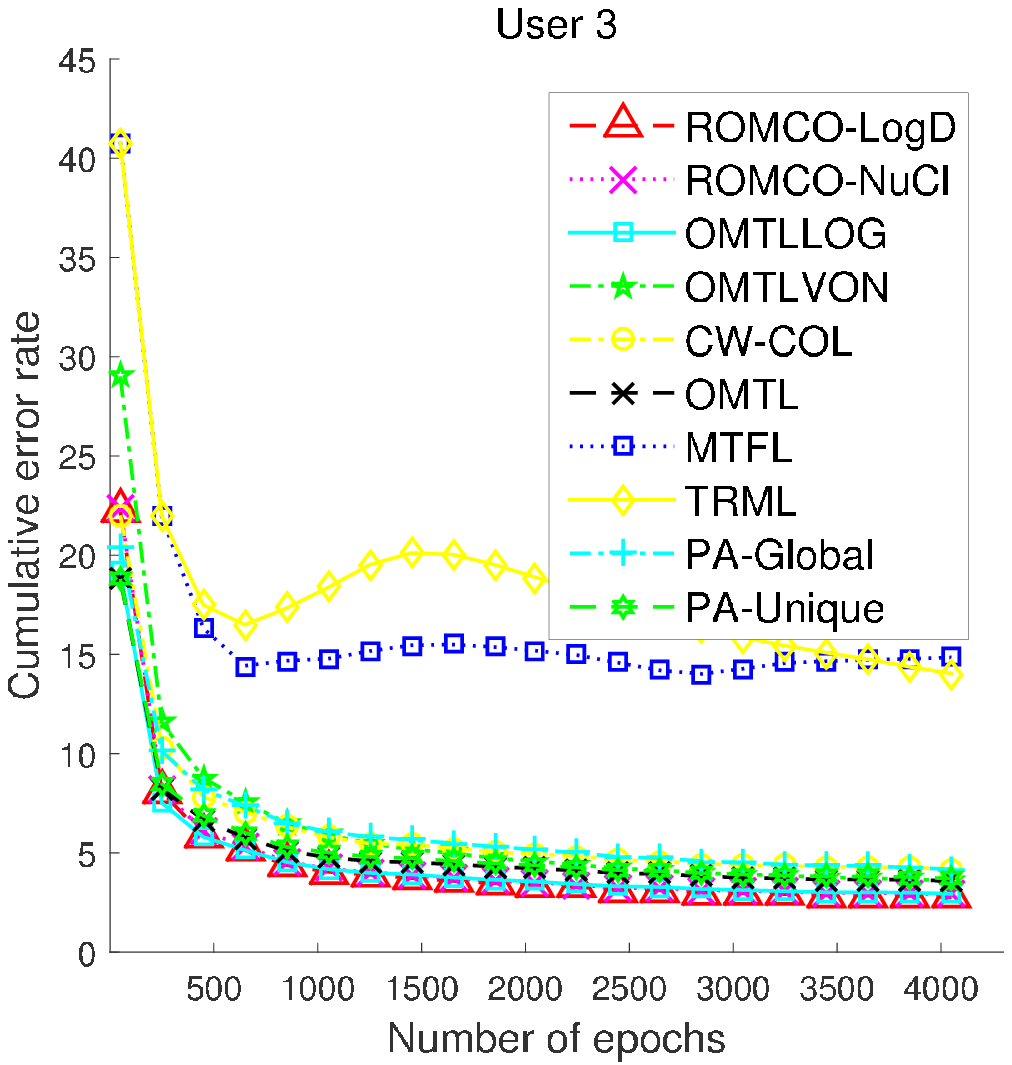}}
\subfigure {\includegraphics[width=0.2435\textwidth,height=4.4cm]{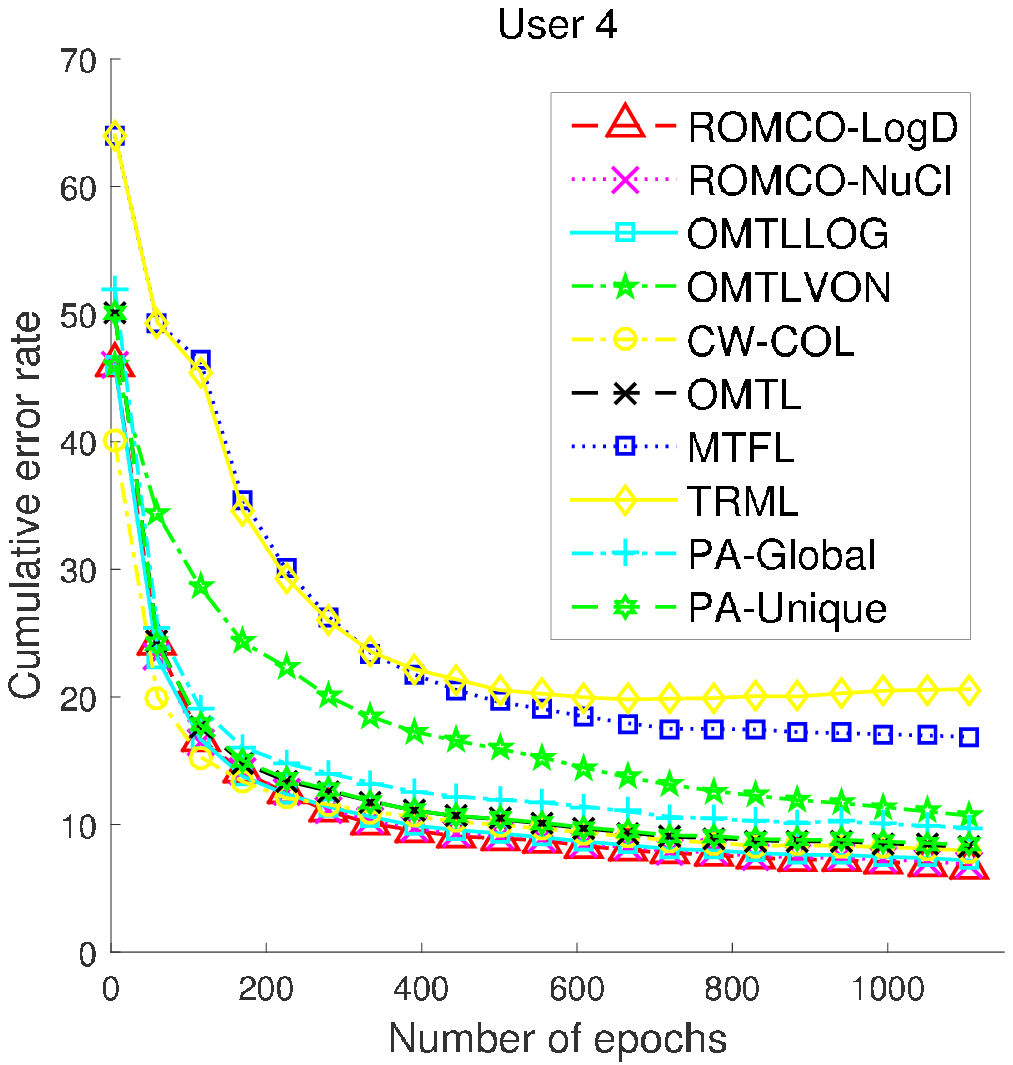}}
\end{figure*}

We applied online multi-task learning to build effective personalized spam filters.
The task is to classify each new incoming email massage into two categories: \emph{legitimate} or \emph{spam}.
We used a dataset hosted by the Internet Content Filtering Group.
The dataset contains 7068 emails collected from mailboxes of four users (denoted by \emph{user1}, \emph{user2}, \emph{user3}, \emph{user4}). Basically, the set of all emails received by a user was not specifically generated for that user.
However, the characteristic of each user's email could be said to match user's interest.
Each mail entry was converted to a word document vector using the TF-IDF (term frequency-inverse document frequency) representation.

Since the email dataset had no time-stamp, each email list was shuffled into a random sequence. The cumulative error rate and
F1-measure results of 10 shuffles are listed in Table \ref{email_comparison_result}. In addition, the cumulative error rate for the four specific users along the learning process is presented in Fig. \ref{EmailSpam-figure}. We also report each algorithm's run-time, i.e., the time consumed by both training and test phase during the complete online learning process in Table \ref{running_time}. From these results, we can make several observations.

First, the proposed ROMCO-NuCl/LogD outperform other online learners in terms of the error rate and F1-measure.
In particular, in accordance to the results from the four specific users, learning tasks collaboratively with both the common and personalized structures consistently beats both the global model and the personalized model.

Second, the performance of the proposed online multi-task learning methods are better than that of the two batch learning algorithms (MTFL and TRML). It should be noted that compared to online learners which update models based only on the current
samples, batch learning methods have the advantage of keeping a substantial amount of recent training samples, at the cost
of storage space and higher complexity. In fact, the proposed ROMCO-NuCL/LogD are more efficient than the batch incremental methods,
e.g., it could be more than 100 times faster than batch MTFL in large-sized dataset (28.72 secs versus 302.17 secs in EachMovie as shown in Table \ref{running_time}). ROMCO-NuCL/LogD do not store recent training samples. They only use the current training sample and a simple rule to update the model. In contrast, batch learning algorithms need to keep a certain number of recent training samples in memory,
learning to extra burden on storage and complexity. In addition, both MTFL and TRML need to solve an optimization problem in
an iterative manner. For practical applications involving hundreds of millions of users and features, the batch learning algorithms are no longer feasible, while online learners remain highly efficient and scalable.

We also observed that ROMCO-NuCL/LogD are slightly slower than CW-COL, OMTLVON/LOG and OSMTL-e. This is expected as ROMCO-NuCL/LogD have to update two component weight matrices. However, the extra computational cost is worth considering the significant improvement over the two measurements achieved by using the two components.

\vspace{-0.05in}
\subsection{MHC-I Binding Prediction}

\begin{table}[t]
\centering
\caption{Cumulative error rate (\%) and F1-measure (\%) and its Standard Deviation in the parenthesis on the MHC-I Dataset Results over 12 Tasks}
\label{Bio-table}
\scriptsize
\begin{tabular}[2.1\textwidth] {|c|c|c|c|}
\hline
\multirow{2}{*}{Algorithm} & \multirow{2}{*}{Error Rate} & Positive Class  & Negative Class \\
  &  & F1-measure &  F1-measure \\
\hline\hline
MTFL				& 43.84(6.05)  & 51.04(10.12) & 59.12(7.35)   \\ \hline
TRML				& 44.26(5.98)  & 50.50(9.97)  & 58.80(7.40)   \\ \hline
PA-Global           & 44.70(2.68)  & 45.44(9.86)  & 61.28(3.17)   \\ \hline
PA-Unqiue           & 41.62(3.95)  & 51.08(10.23) & 63.02(2.89)   \\ \hline
CW-COL			    & 41.32(4.46)  & 50.89(10.70) & 63.59(3.08)	 \\ \hline 
OMTL				& 41.56(3.97)  & 51.13(10.25) & 63.08(2.89)   \\ \hline
OSMTL-e             & 42.78(0.65)  & 50.48(0.43)  & 61.59(0.96)   \\ \hline
OMTLVON             & 38.13(5.03)  & 54.73(10.44) & 66.39(4.02)   \\ \hline
OMTLLOG             & 38.08(5.16)  & 54.83(10.56) & 66.40(4.13)    \\ \hline \hline
\textbf{ROMCO-NuCl} & 38.09(5.32)  & 55.03(10.53) & 66.34(4.09)   \\ \hline
\textbf{ROMCO-LogD} & \textbf{37.91(4.97)}    & \textbf{55.09(10.10)}   & \textbf{66.55(4.14)}  \\ \hline
\end{tabular}
\end{table}

\begin{figure*}[t]
\centering
\caption{Cumulative error rate on the 12 tasks of MHC-I dataset along the entire online learning process}
\label{Bio-figure}
\subfigure {\includegraphics[width=0.2445\textwidth,height=4.4cm]{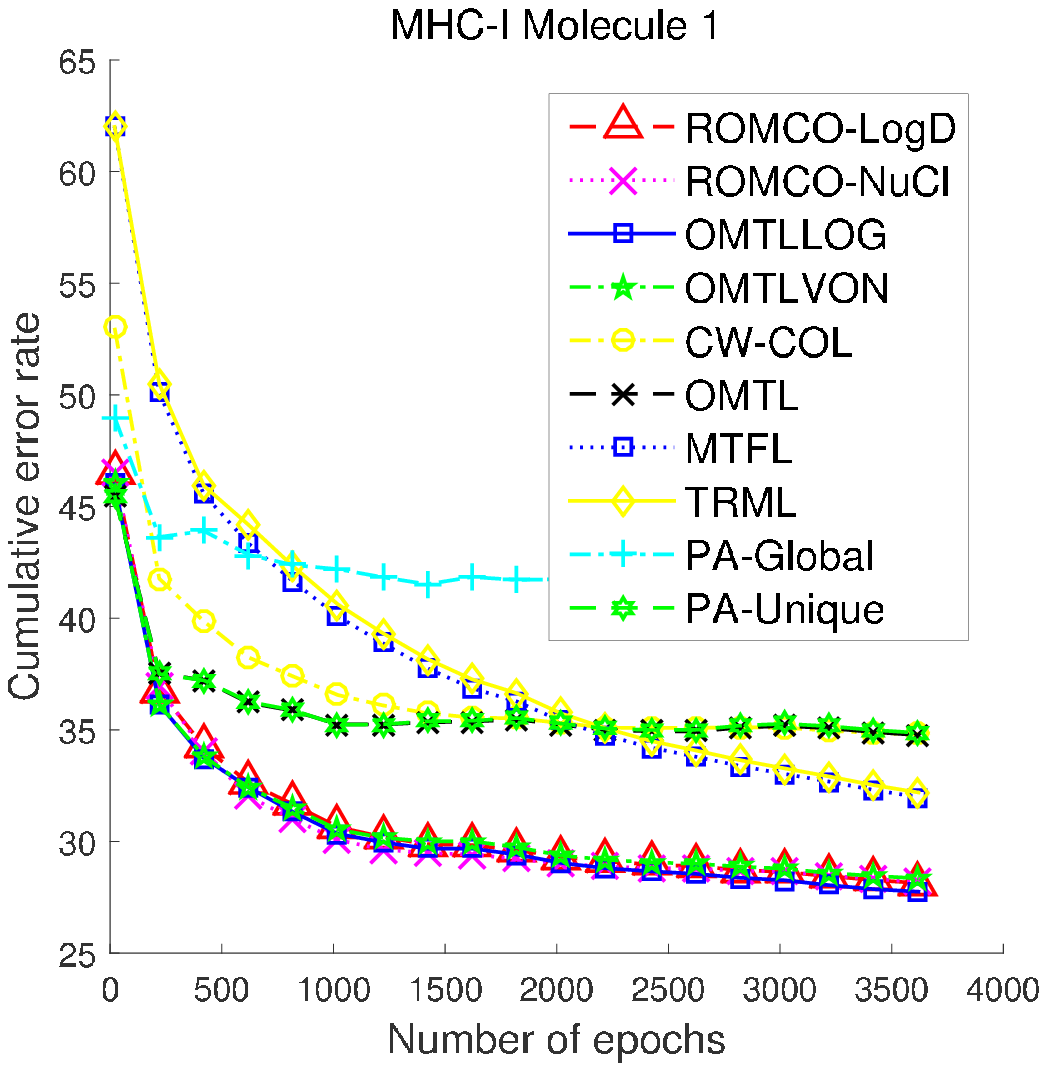}}
\subfigure {\includegraphics[width=0.2445\textwidth,height=4.4cm]{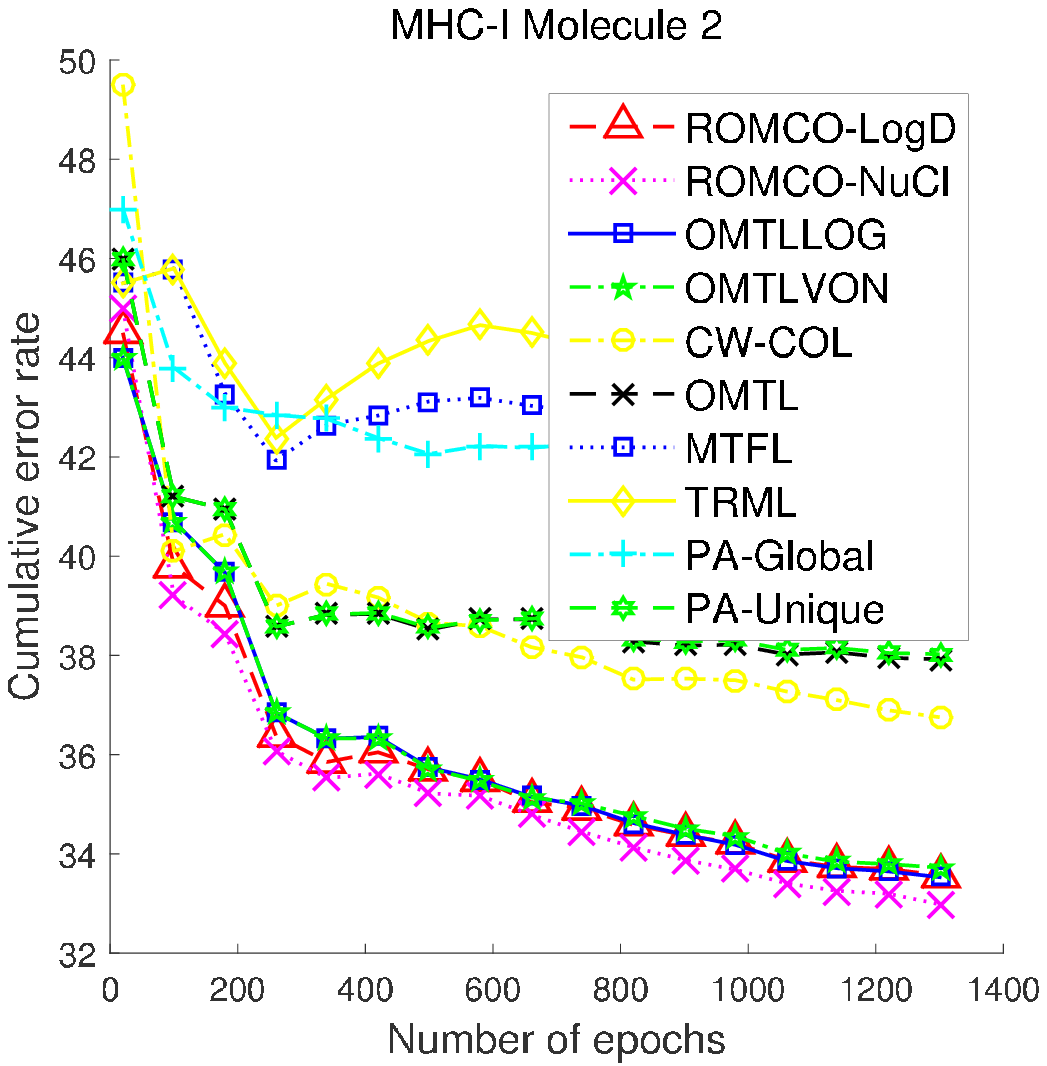}}
\hfil
\subfigure {\includegraphics[width=0.2445\textwidth,height=4.4cm]{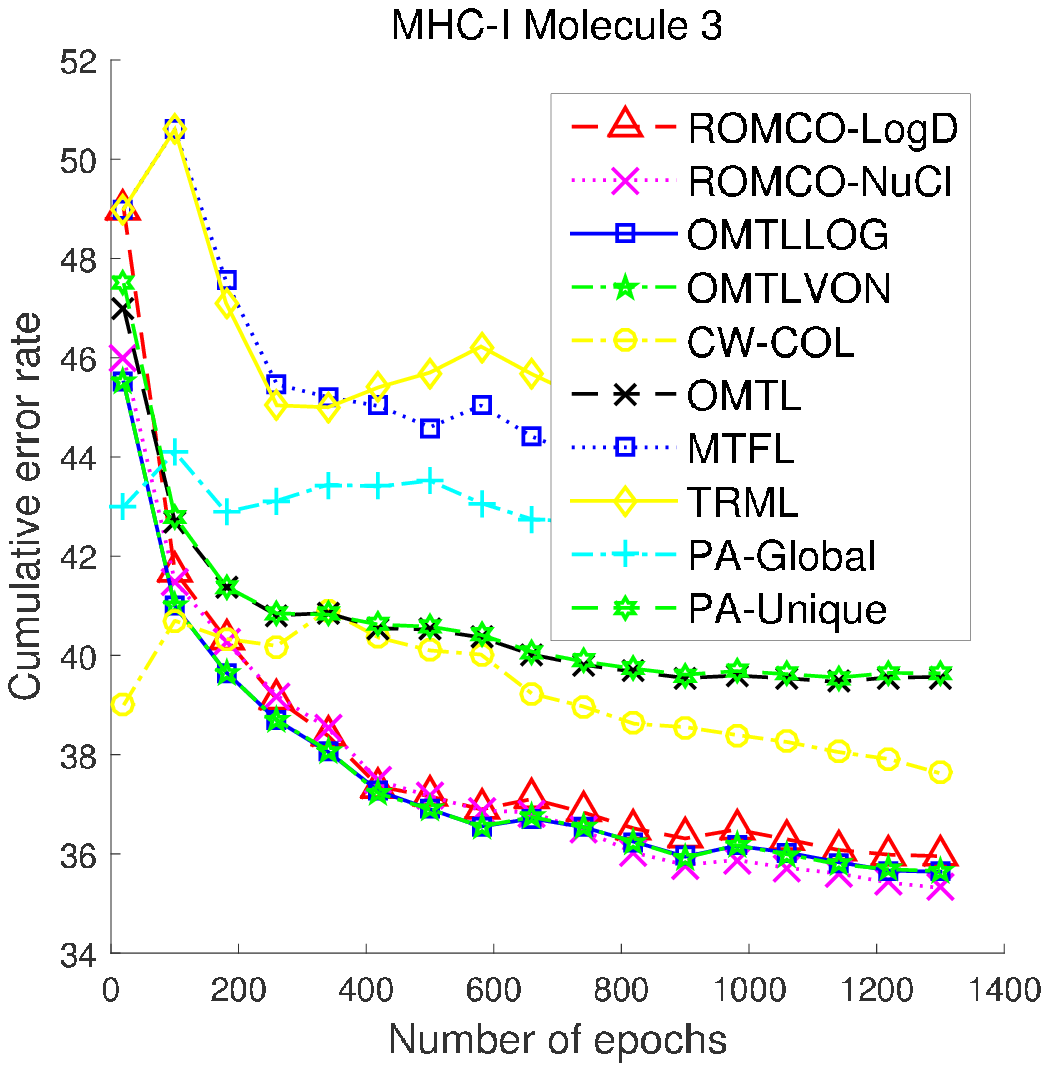}}
\subfigure {\includegraphics[width=0.2445\textwidth,height=4.4cm]{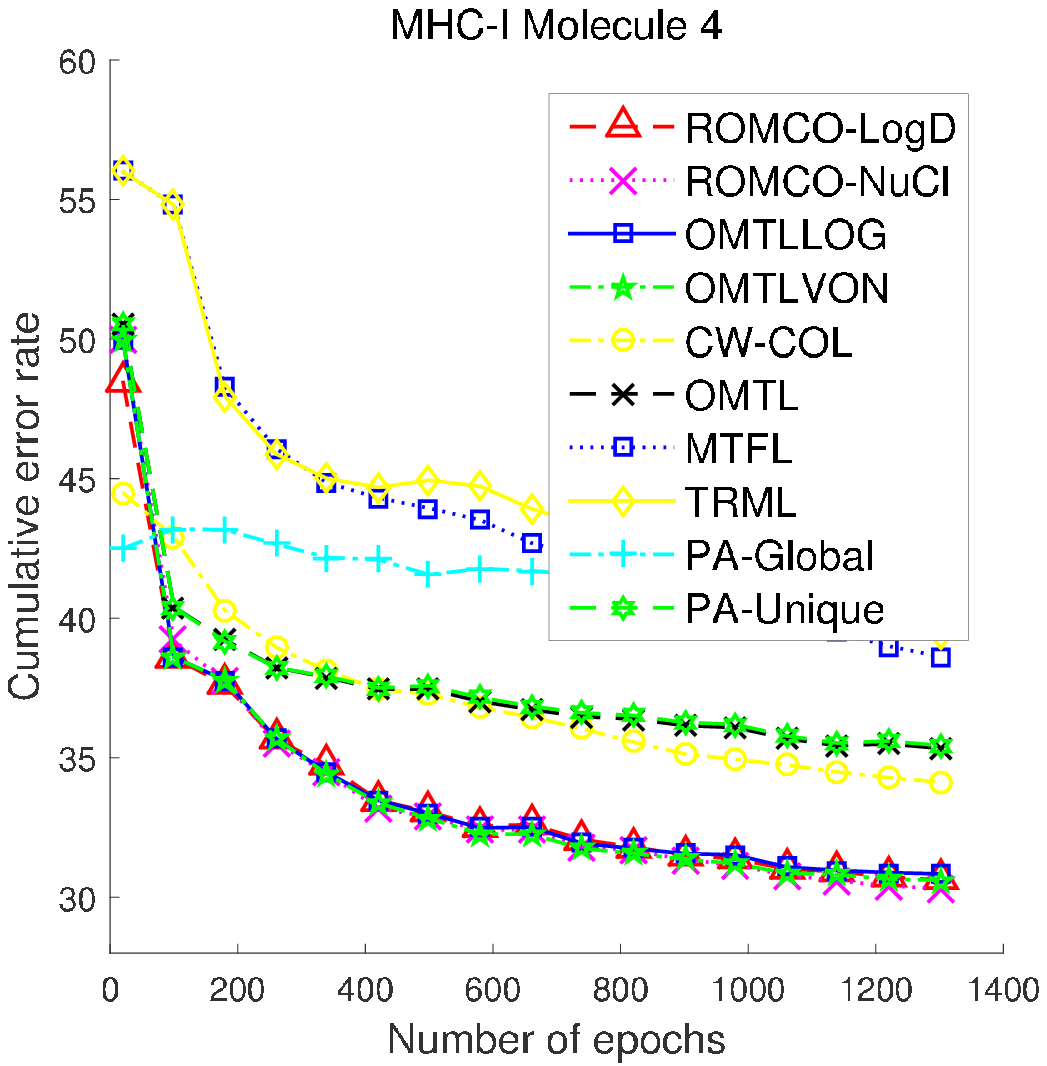}}
\hfil
\subfigure {\includegraphics[width=0.2445\textwidth,height=4.4cm]{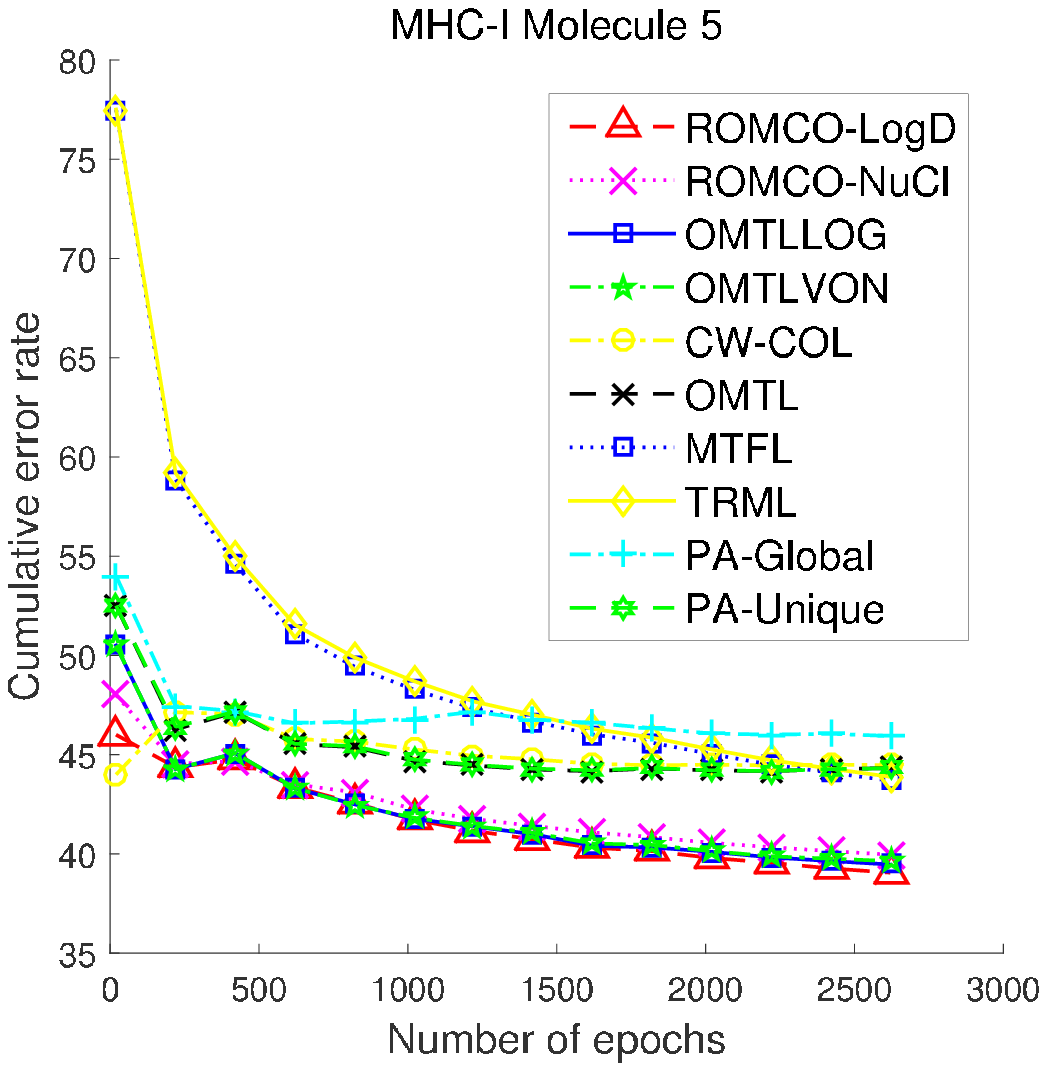}}
\subfigure {\includegraphics[width=0.2445\textwidth,height=4.4cm]{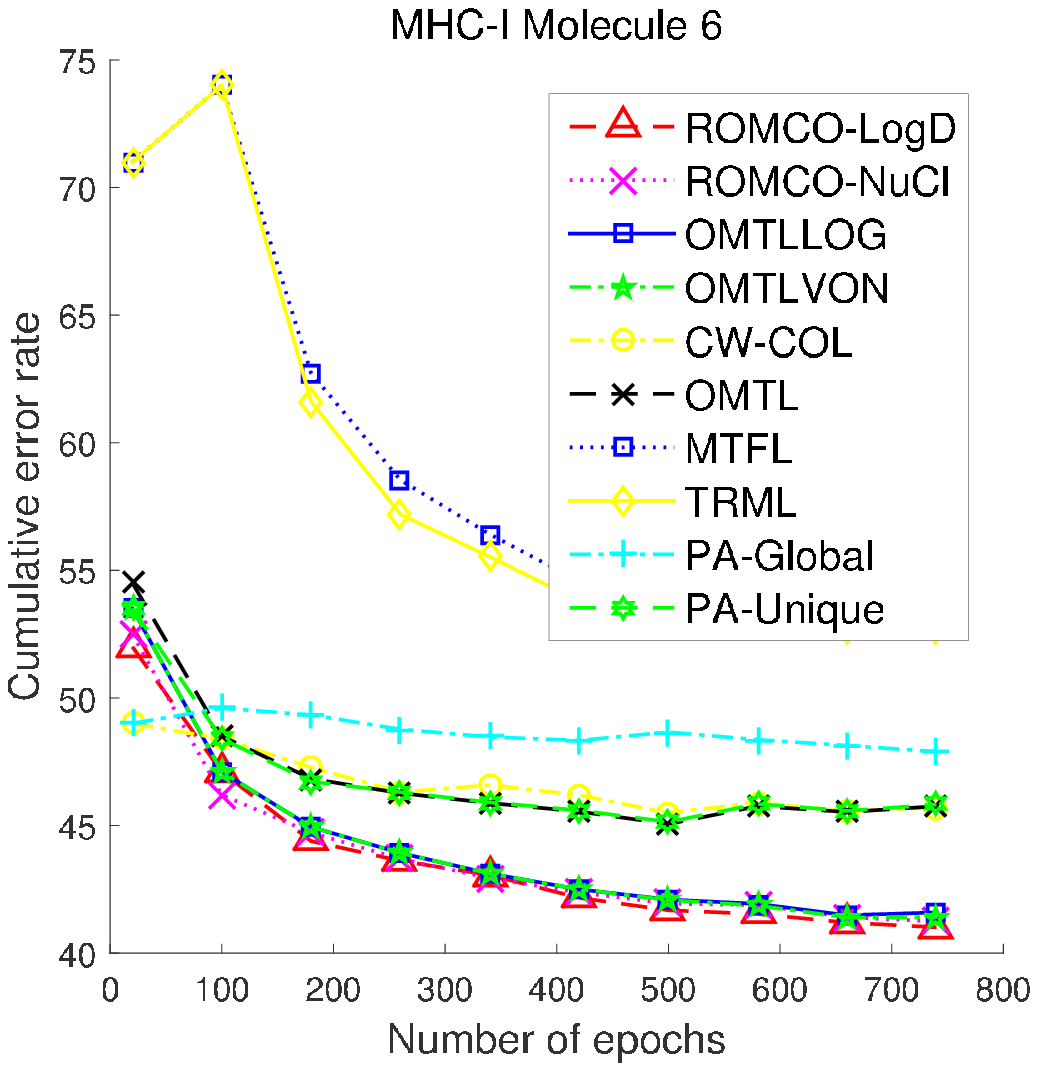}}
\hfil
\subfigure {\includegraphics[width=0.2445\textwidth,height=4.4cm]{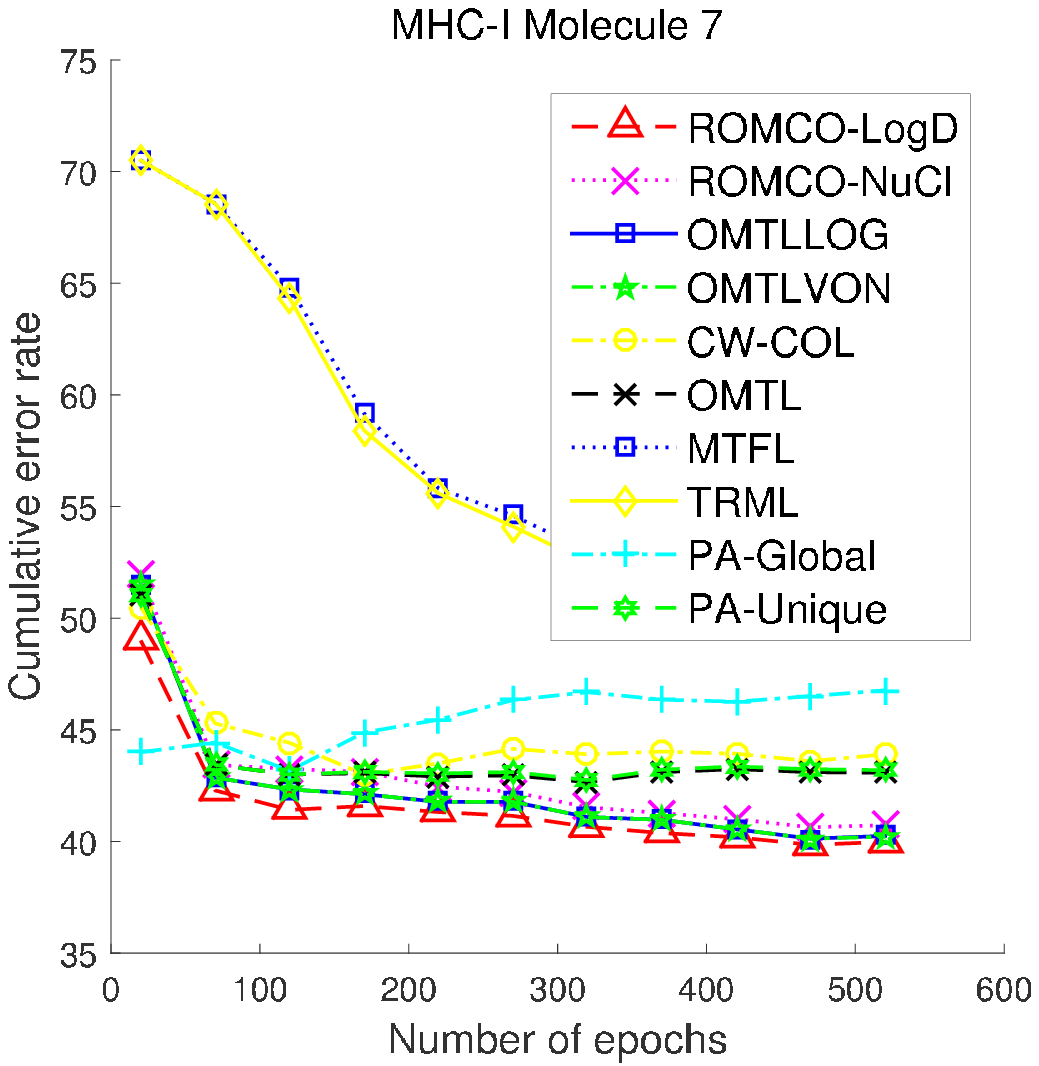}}
\subfigure {\includegraphics[width=0.2445\textwidth,height=4.4cm]{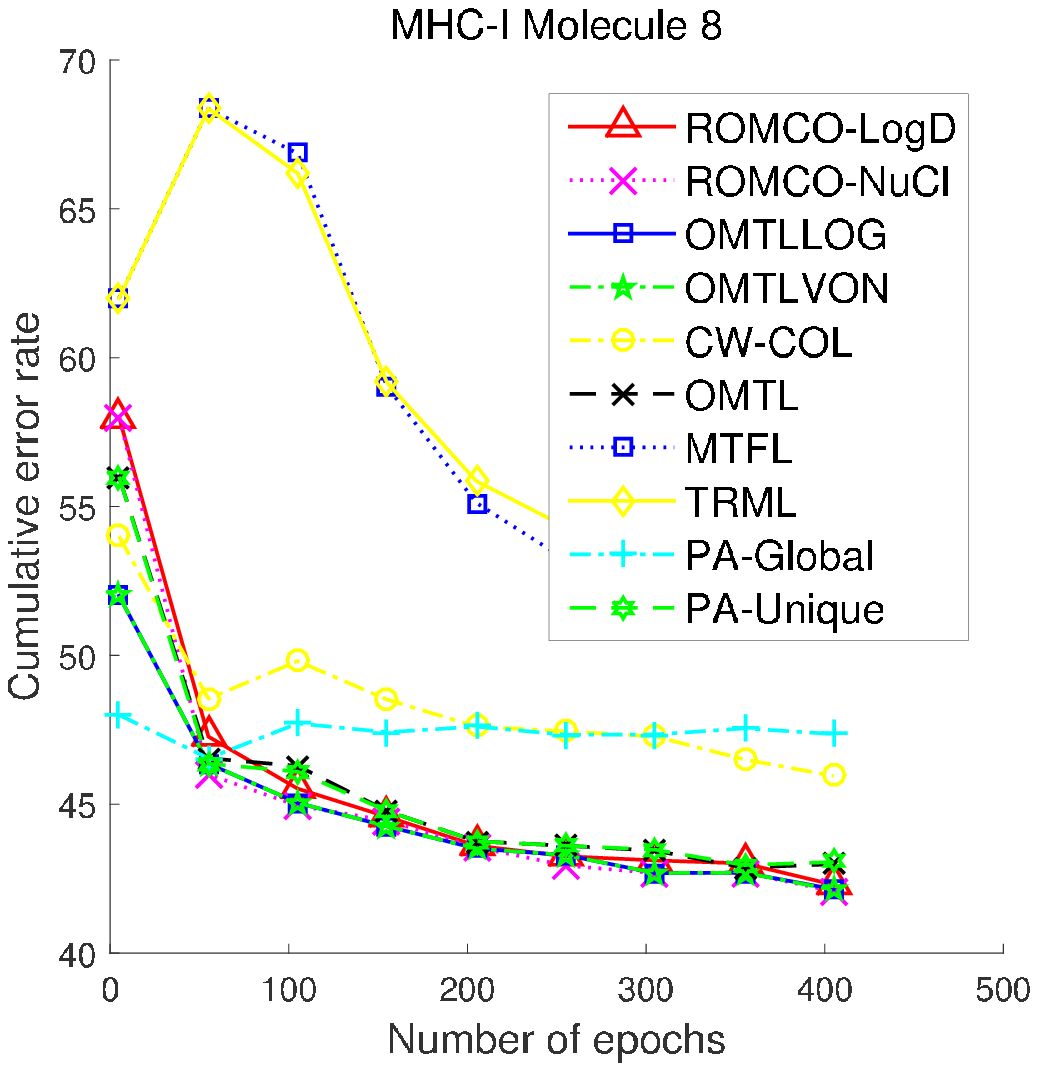}}
\hfil
\subfigure {\includegraphics[width=0.2445\textwidth,height=4.4cm]{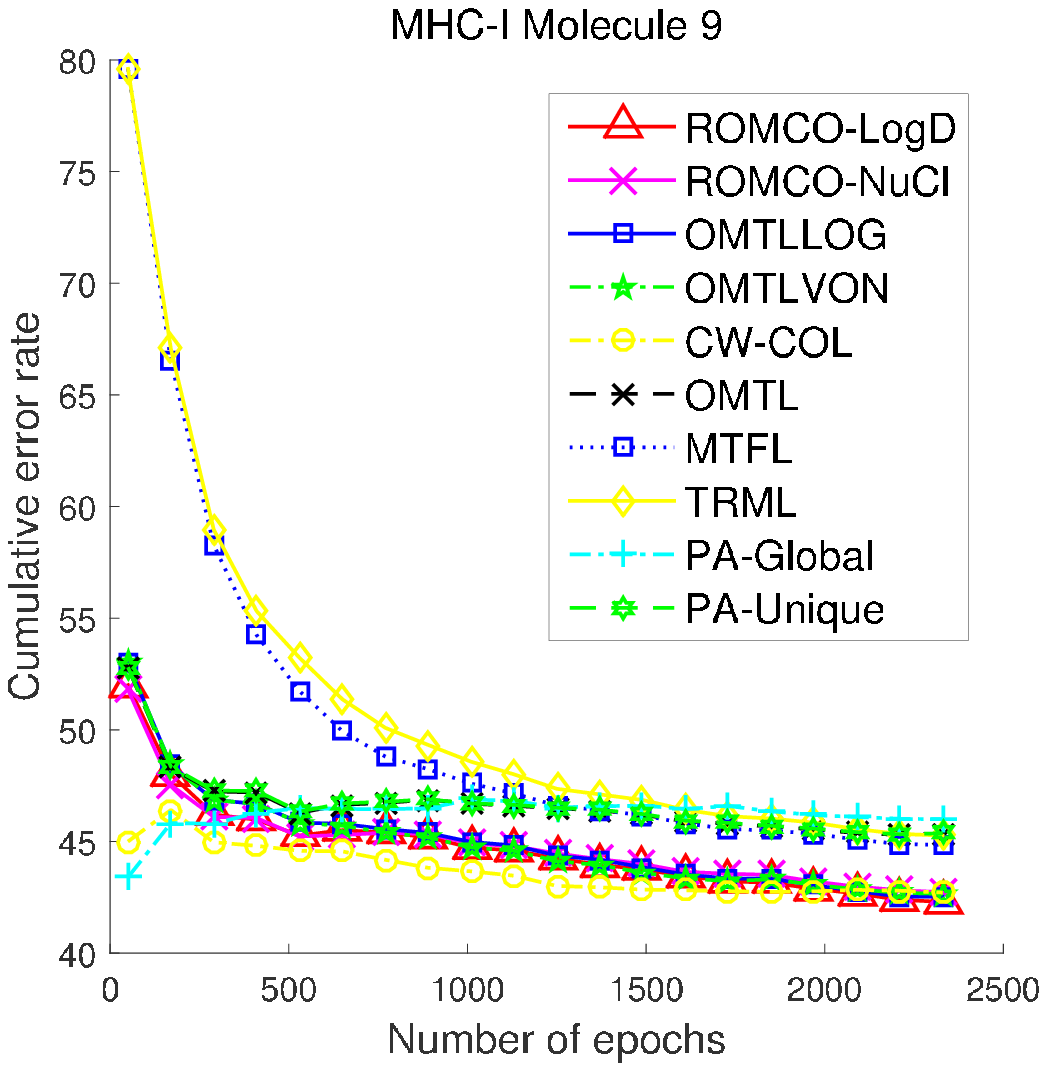}}
\subfigure {\includegraphics[width=0.2445\textwidth,height=4.4cm]{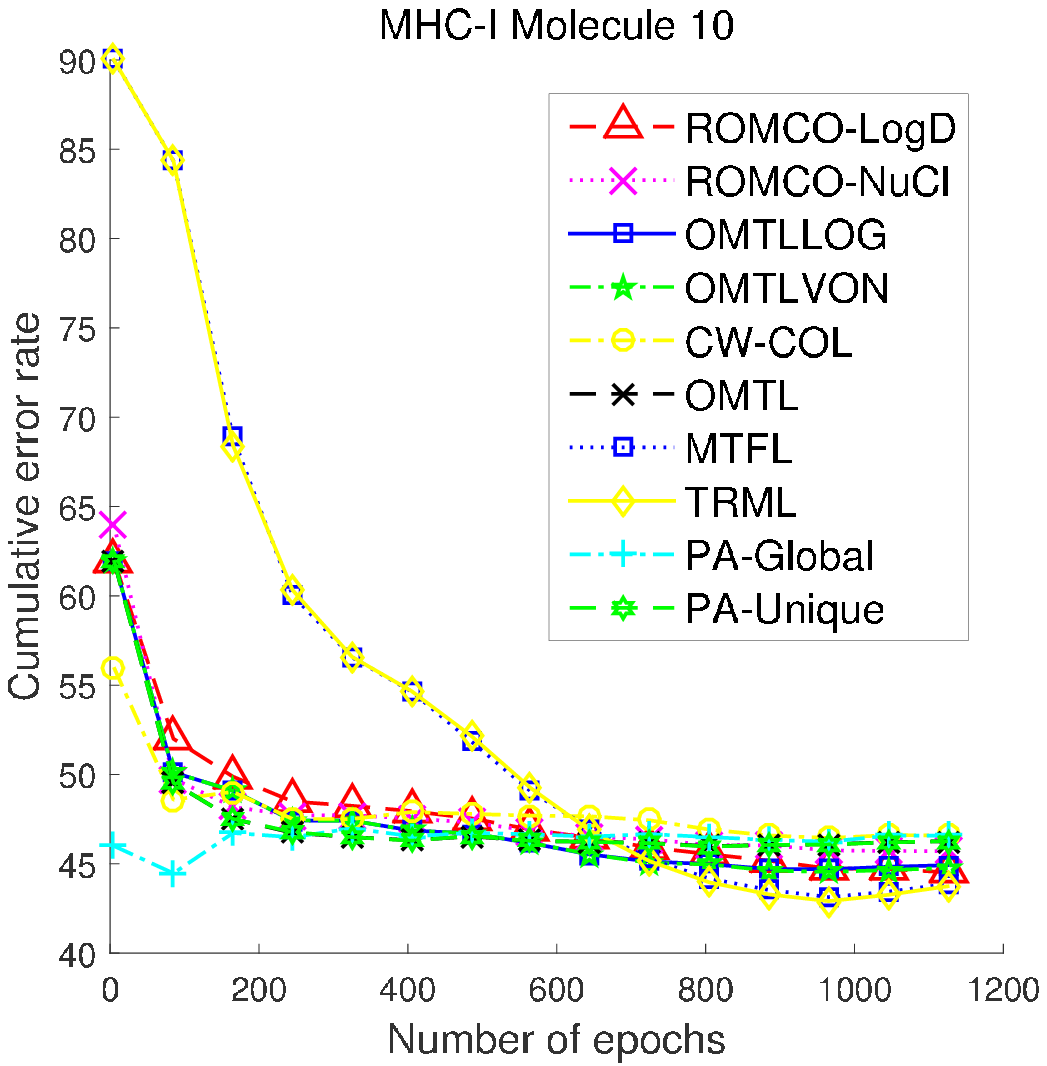}}
\hfil
\subfigure {\includegraphics[width=0.2445\textwidth,height=4.4cm]{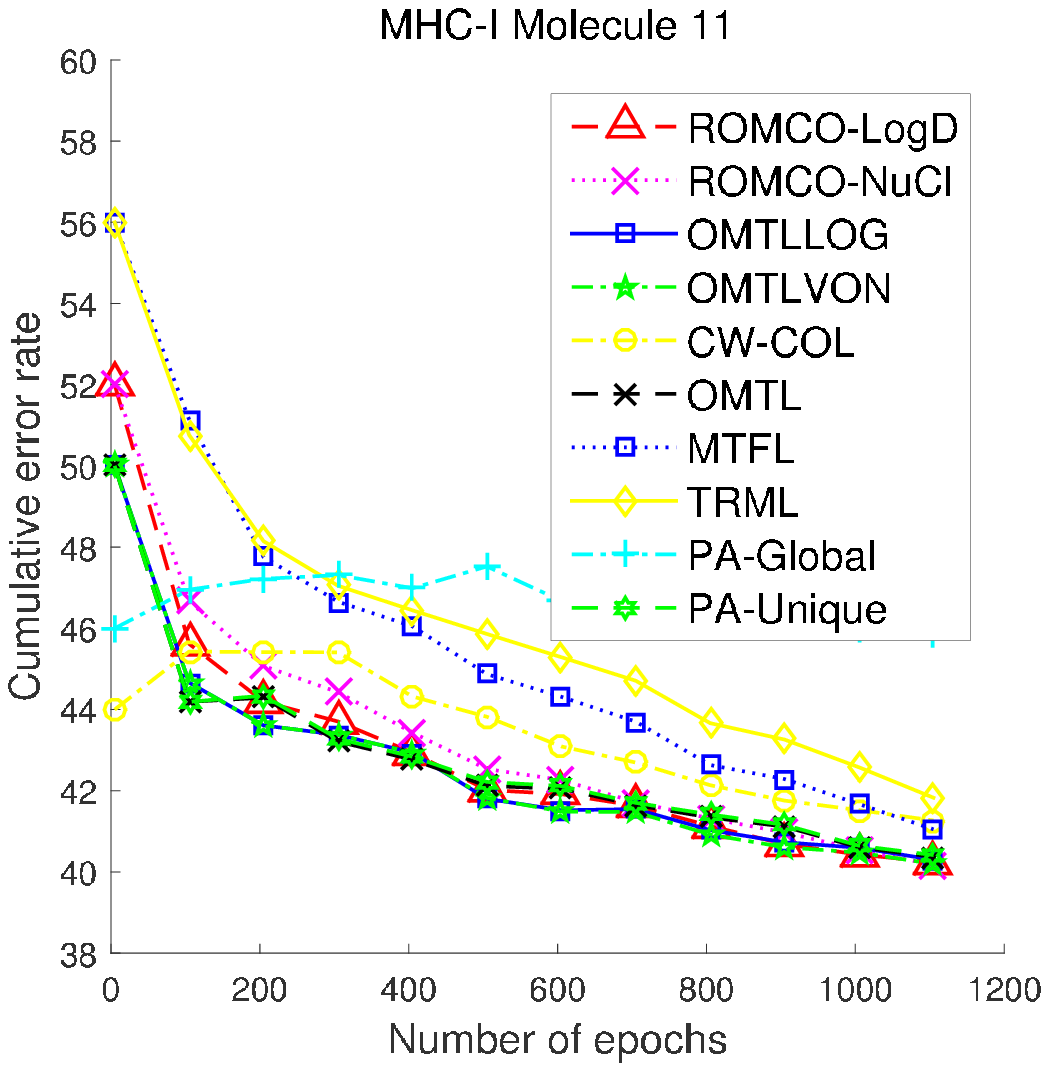}}
\subfigure {\includegraphics[width=0.2445\textwidth,height=4.4cm]{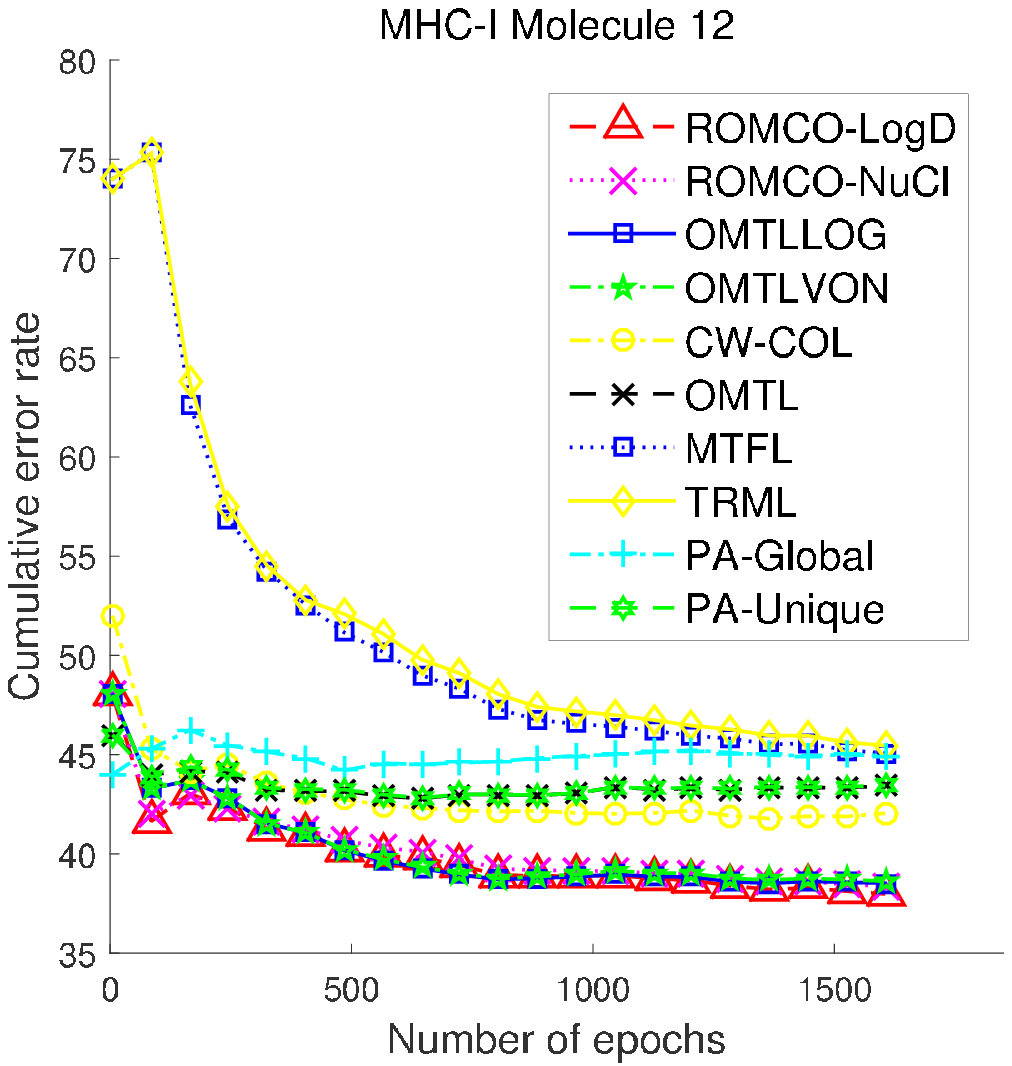}}
\end{figure*}

Computational methods have been widely used in bioinformatics to build models to infer properties from biological data~\cite{yang2014ensemble,yang2014ldsplit}. In this experiment, we evaluated several methods to predict peptide binding to human MHC (major histocom-patibility complex) class I molecules. It is known that peptide binding to human MHC-I molecules plays a crucial role in the immune system. The prediction of such binding has valuable application in vaccine designs, the diagnosis and treatment of cancer, etc. Recent
work has demonstrated that there exists common information between related molecules (alleles) and such information can be
leveraged to improve the peptide MHC-I binding prediction.

We used a binary-labeled MHC-I dataset.
The data consists of 18664 peptide sequences for 12 human MHC-I molecules.
Each peptide sequence was converted to a 400 dimensional feature vector following~\cite{LiCHLJ11}.
The goal is to determine whether a peptide sequence (instance) binds to a MHC-I molecule (task) or not, i.e., \emph{binder} or \emph{non-binder}.

We reported the average cumulative error rate and F1-measure of 12 tasks in Table \ref{Bio-table}. To make a clear comparison
between the proposed ROMCO-NuCL/LogD and baselines, we showed the variation of their cumulative error rate along the entire online learning process averaged over the 10 runs in Fig. \ref{Bio-figure}.

From these results, we first observed that the permutations of the dataset have little influence on the performance of each method,
as indicated by the small standard deviation values in Table \ref{Bio-table}. Note that the majority of the dataset belongs to the negative class, thus predicting more
examples as the majority class decreases the overall error rate, but also degrades the accuracy of the minority positive class.
The consistently good performance achieved by the proposed ROMCO-NuCL/LogD in terms of the error rate and F1-measures of both classes further demonstrates effectiveness of our algorithms over imbalanced datasets. Moreover, among the six online models, learning related tasks jointly still achieves better performance than learning the tasks individually, as shown by the improvement of ROMCO and OMTL models over the PA-Unique model.

\begin{figure}[t]
\centering
\caption{Average error rate and F1-measure on the EachMovie dataset over 30 tasks along the entire online learning process}\label{EachMovie_figure}
\subfigure {\includegraphics[width=0.24\textwidth,height=4.5cm]{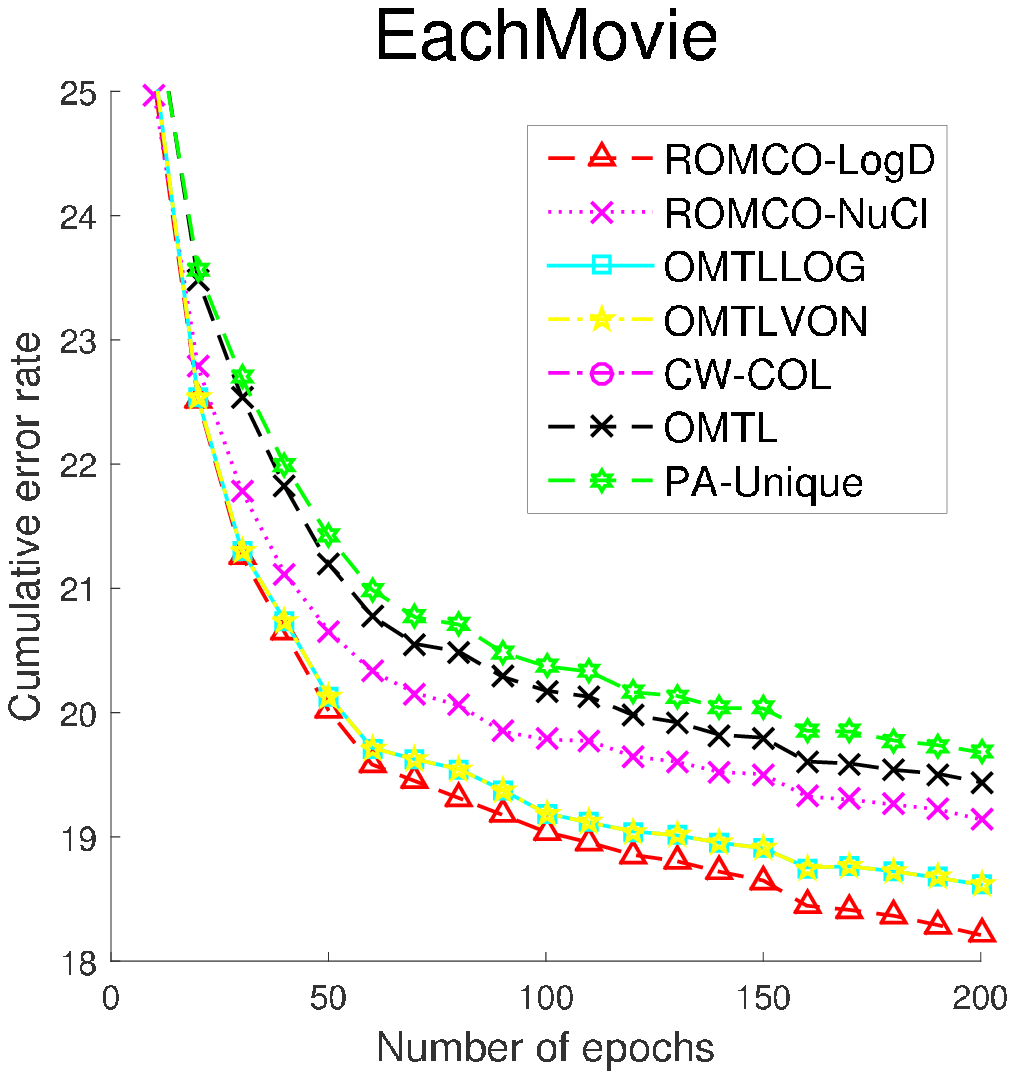}}
\subfigure {\includegraphics[width=0.24\textwidth,height=4.5cm]{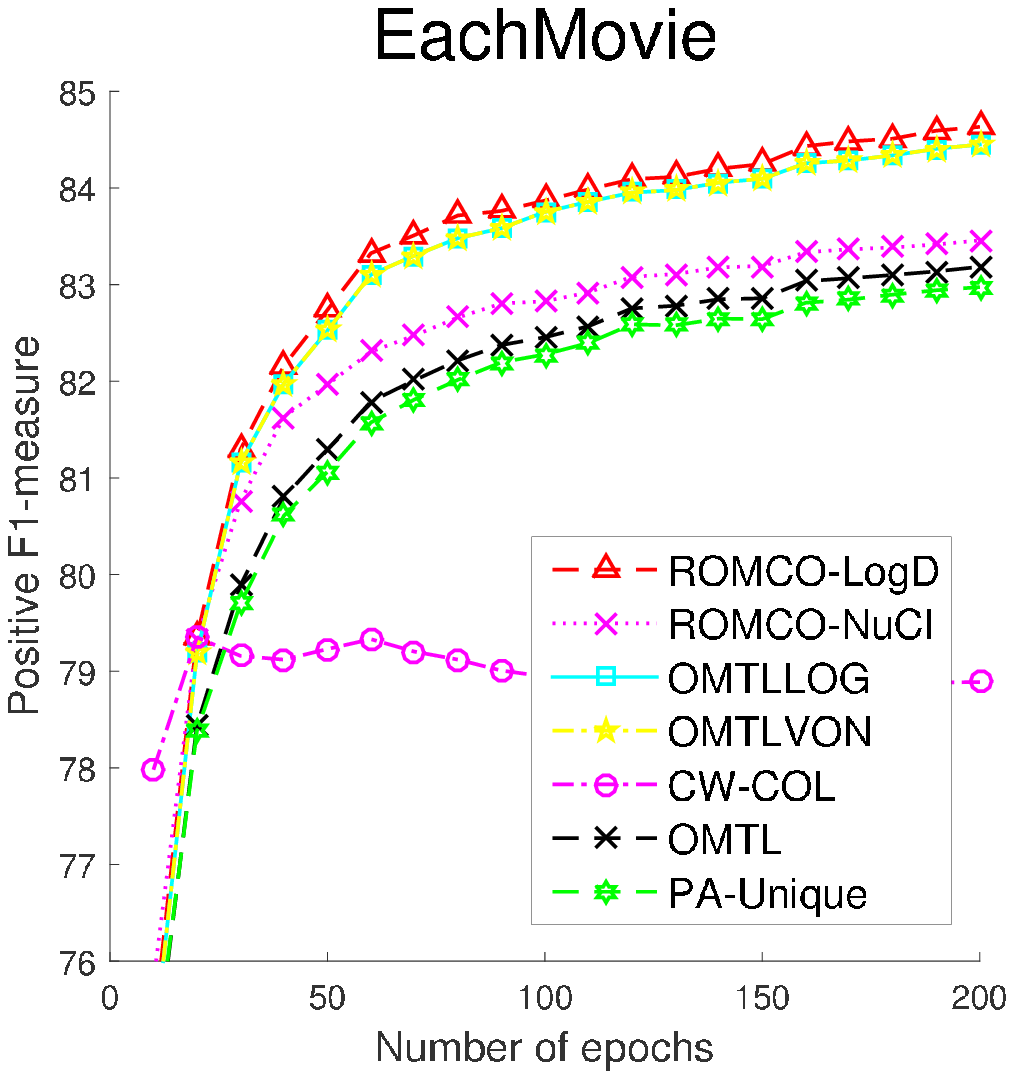}}
\end{figure}

\begin{table}[t]
\centering
\caption{Average Cumulative Error rate (\%) and F1-measure(\%) with their standard deviation in the parenthesis over 30 tasks of EachMovie Dataset Results}
\label{EachMovie-table}
\scriptsize
\begin{tabular}[2.1\textwidth] {|c|c|c|c|}
\hline
\multirow{2}{*}{Algorithm} & \multirow{2}{*}{Error Rate} & Positive Class  & Negative Class \\
  &  & F1-measure &  F1-measure \\
\hline\hline
MTFL				& 27.51(12.25) & 79.18(12.87) &  36.06(14.85) \\ \hline
TRML				& 26.58(11.82) & 79.89(12.49) &  37.64(15.05) \\ \hline
PA-Global           & 31.80(5.87) & 74.43(8.61)   &  47.96(14.47) \\ \hline
PA-Unqiue           & 19.68(7.39) & 82.97(9.35)   &  57.80(21.05) \\ \hline
CW-COL			    & 25.45(6.96) & 78.89(9.30)   &	 53.95(16.71) \\ \hline 
OMTL				& 19.44(7.28) & 83.18(9.29)   &  57.77(21.39)  \\ \hline
OSMTL-e             & 20.73(7.15) & 82.31(9.04)   &  \textbf{58.76(18.71)}  \\ \hline
OMTLVON             & 18.61(7.29) & 84.45(8.64)   &  55.92(23.94) \\ \hline
OMTLLOG             & 18.61(7.29) & 84.45(8.64)   &  55.92(23.94)  \\ \hline \hline
\textbf{ROMCO-NuCl} & 19.14(7.20) & 83.46(9.25)   &  58.27(21.17) \\ \hline
\textbf{ROMCO-LogD} & \textbf{18.21(6.71)}    & \textbf{84.63(8.41)}   & 55.53(25.02)  \\ \hline
\end{tabular}
\end{table}

\vspace{-0.05in}
\subsection{Movie Recommender System}

\begin{figure*}
\centering
\caption{Sensitivity analysis on the effect of the parameter $\lambda_1$ and $\lambda_2$ in terms of the error rate}\label{Sensitivity_Figure}
\subfigure {\includegraphics[width=0.2445\textwidth,height=4.6cm]{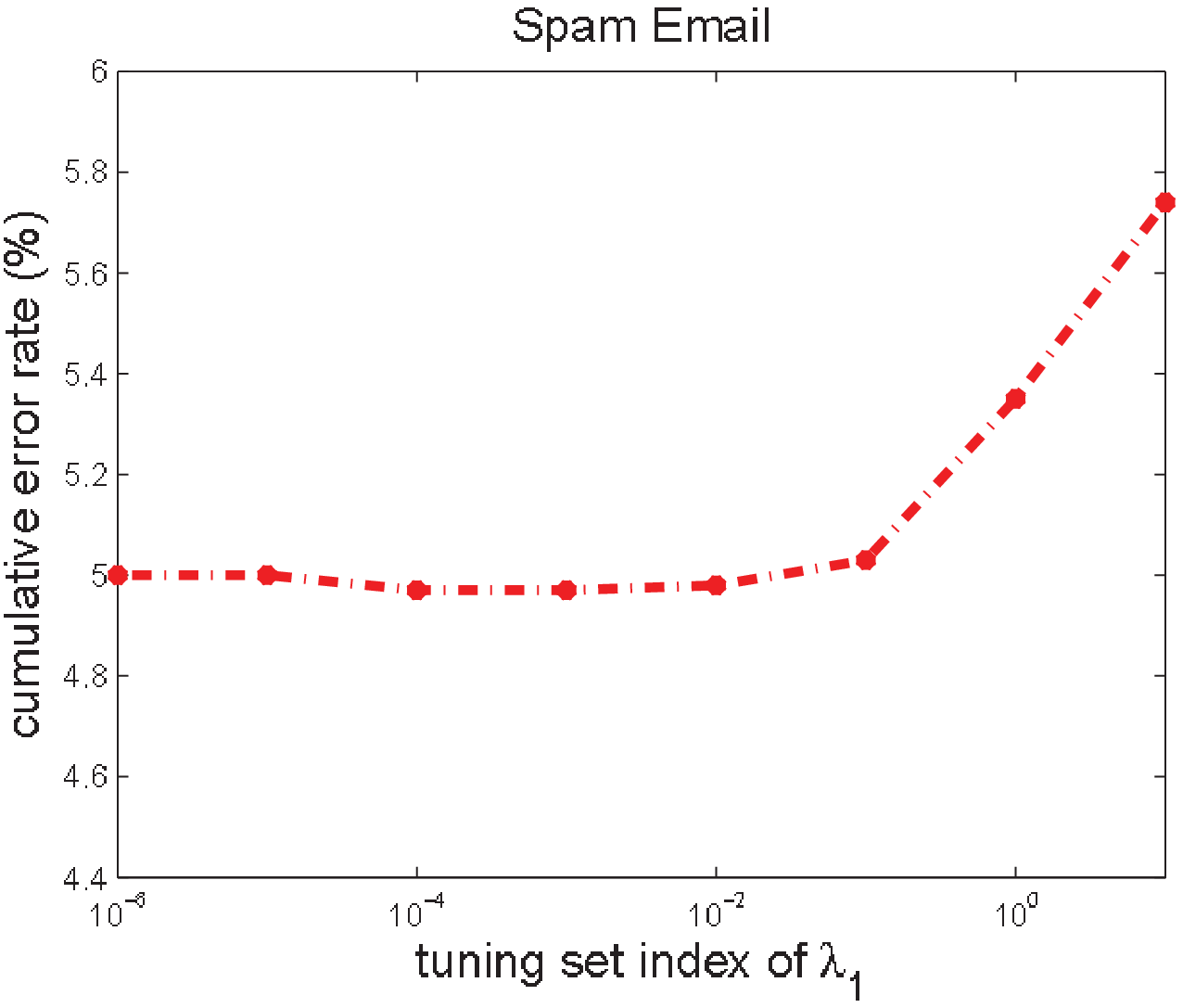}}
\subfigure {\includegraphics[width=0.2445\textwidth,height=4.6cm]{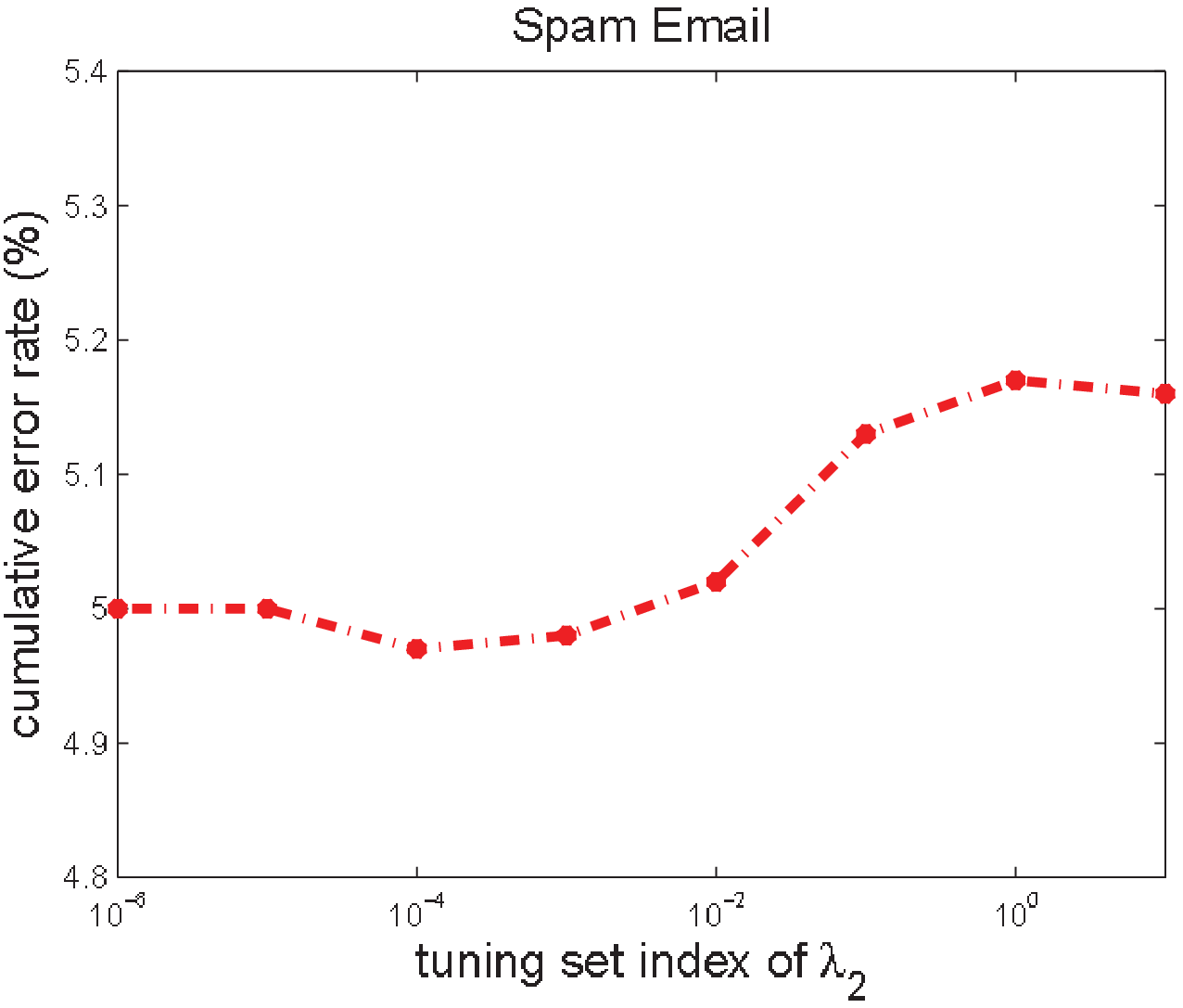}}
\subfigure {\includegraphics[width=0.2445\textwidth,height=4.6cm]{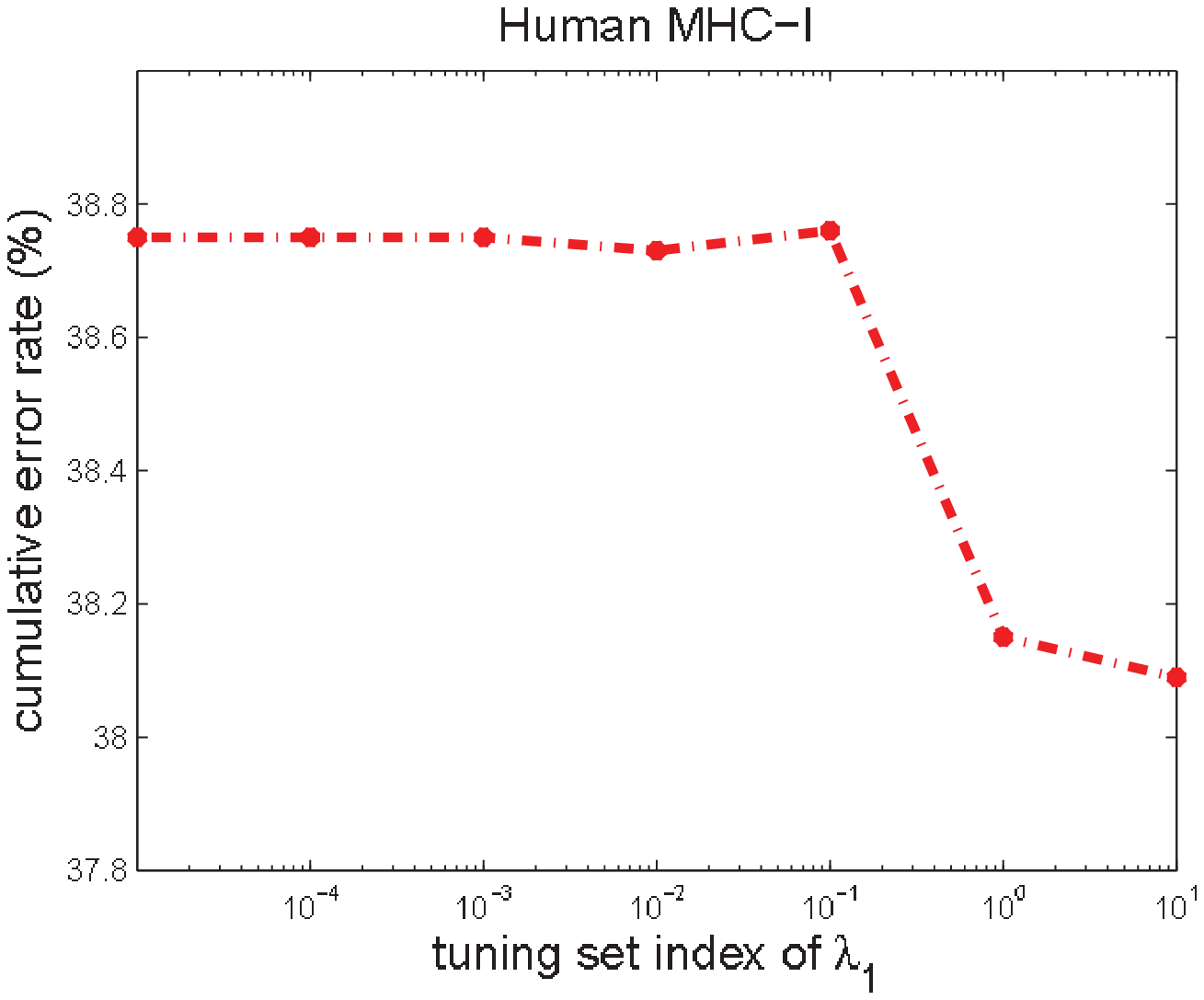}}
\subfigure {\includegraphics[width=0.2445\textwidth,height=4.6cm]{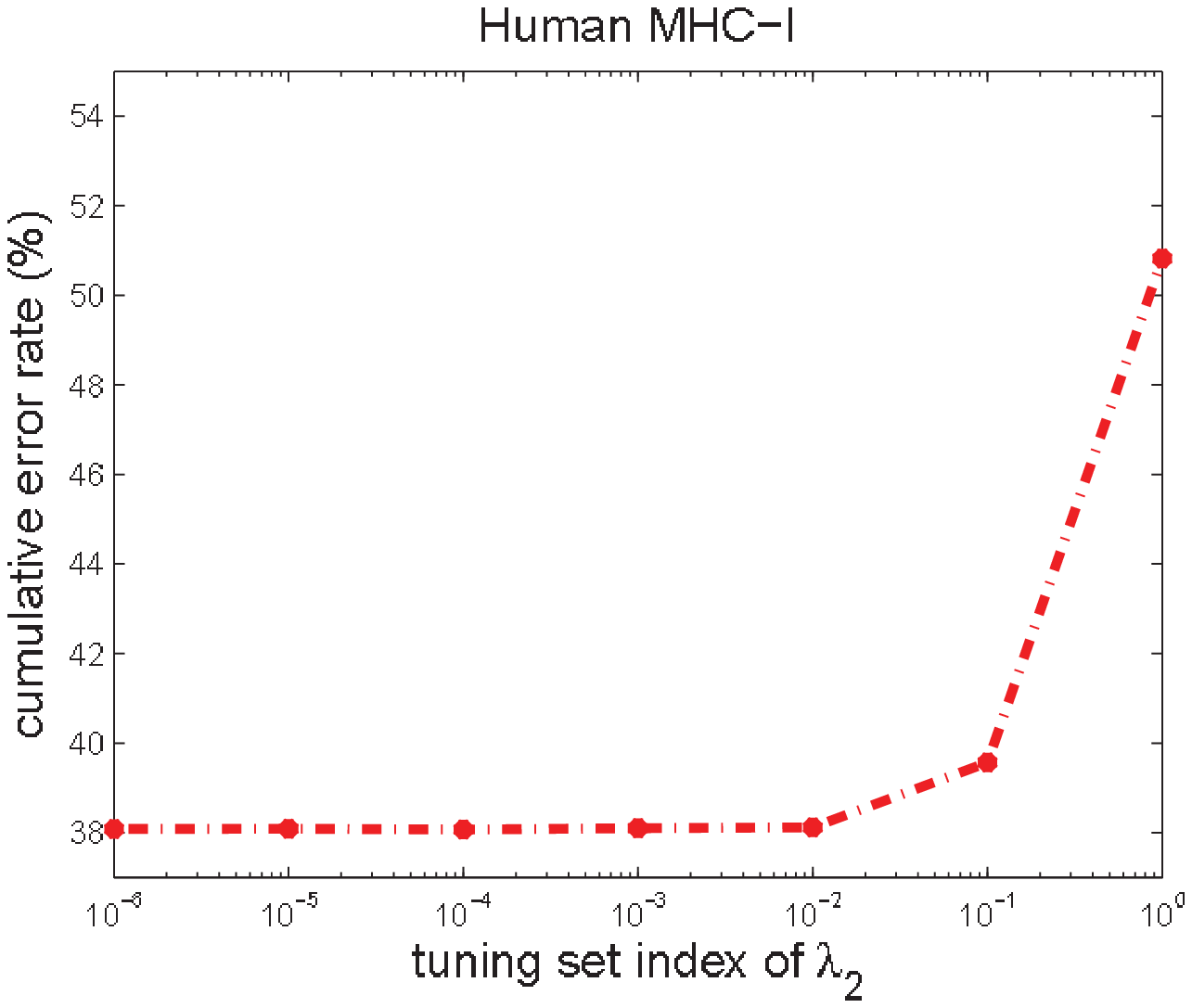}}
\end{figure*}

In recent years, recommender systems have achieved great success in many real-world applications.
The goal is to predict users' preferences on targeted products, i.e., given partially observed user-movie rating entries
in the underlying (ratings) matrix, we would like to infer their preference for unrated movies.

We used a dataset hosted by the DEC System Research Center that collected the EachMovie recommendation data for 18 months.
During that time, 72916 users entered a total of 2811983 numeric ratings for 1628 different movies.
We randomly selected 30 users (tasks) who viewed exactly 200 movies with their rating as the target classes.
Given each of the 30 users, we then randomly selected 1783 users who viewed the same 200 movies and used their ratings as the features of the movies.
The six possible ratings (i.e., $\{1,\ldots,6\}$) were converted into binary classes (i.e., \emph{like} or \emph{dislike}) based on the rating order.

Table~\ref{EachMovie-table} shows the comparison results in terms of the average cumulative error rate and F1-measure.
Fig.~\ref{EachMovie_figure} depicts the detailed the cumulative error rate along the entire online learning process over the averaged 30 tasks of the EachMovie dataset.
From these results, we can make several conclusions.

First, it can be seen that the proposed ROMCO-NuCl/LogD outperform other baselines: ROMCO-NuCl/LogD always provide smaller error rates and higher F1-measures compared to other baselines.
It shows that our algorithms can maintain a high quality of prediction accuracy.
We believe that the promising result is generally due to two reasons: First, the \emph{personalized} and \emph{correlative} patterns are effective to discover the personalized tasks and task relativeness, and these patterns are successfully captured in three real-world datasets.
Second, once an error occurs in at least one task, ROMCO-NuCl/LogD would update the entire task matrix.
That would benefit other related tasks with few learning instances since the shared subspaces would be updated accordingly.

Next, we observed that ROMCO-LogD is better than ROMCO-NuCl in Fig. ~\ref{EachMovie_figure} in terms of the error rate and F1-measure. This is expected because compared to the nuclear norm, ROMCO-LogD is able to achieve better rank approximation with the log-determinant function, i.e., it reduces the contribution of large singular values while approximating the small singular values into zeros.

\subsection{Effect of the Regularization Parameters}
\vspace{-0.05in}

We used Spam Email and Human MHC-I datasets as the cases for parameter sensitivity analysis.
In the Spam Email dataset, by fixing $\lambda_2=0.0001$ as well as varying the value of $\lambda_1$ in the tuning set, i.e., $[10^{-6},\ldots,10^0]$, we studied how the parameter $\lambda_1$ affects the classification performance of ; by fixing $\lambda_1=0.0001$ as well as varying the value of $\lambda_2$ in tuning set of $[10^{-7},\ldots,10]$, we study how the parameter affects the performance of ROMCO-NuCl/LogD.
Similarly, in the Human MHC-I dataset, we studied the pair of $(\lambda_1,\lambda_2)$ by fixing $\lambda_2=0.0001$ with the tuning set of $\lambda_1$ $[10^{-5},\ldots,10^0]$ and by fixing $\lambda_1=1$ with tuning set of $\lambda_2$ $[10^{-6},\ldots,10^0]$.
In Fig.~\ref{Sensitivity_Figure}, we show the classification performance of ROMCO in terms of the error rate for each pair of $(\lambda_1,\lambda_2)$.
From Fig.~\ref{Sensitivity_Figure}, we observed that the performance is worse with an increment of either $\lambda_1$ or $\lambda_2$ over the Spam Email dataset.
It indicates a weak relativeness among the tasks and many personalized tasks existing in the Email dataset.
In Human MHC-I, the poor performance is triggered by a small value of $\lambda_1$ or a large value of $\lambda_2$.
Compared with the Email data, MHC-I contains fewer personalized tasks, meanwhile most tasks are closely related and well represented by a low-dimensional subspace.

\vspace{-0.1in}
\section{Conclusion}

We proposed an online MTL method that can identify sparse personalized patterns for outlier tasks, meanwhile capture a shared low-rank subspace for correlative tasks. As an online technique, the algorithm can achieve a low prediction error rate via leveraging previously learned knowledge. As a multi-task approach, it can balance this trade-off by learning all the tasks jointly. In addition, we proposed a log-determinant function to approximate the rank of the matrix, which, in turn, achieves better performance than the one with the nuclear norm. We show that it is able to achieve a sub-linear regret bound with respect to the best linear model in hindsight, which can be regarded as a theoretical support for the proposed algorithm. Meanwhile, the empirical results demonstrate that our algorithms outperform other state-of-the-art techniques on three real-world applications. In future work, online active learning could be applied in the MTL scenario in order to save the labelling cost. 

\section*{Appendix}

\subsection*{root computation of the Log-Determinant function}

To solve the log-determinant function, we set the derivative of Eq. (\ref{OptimalSigma_log}) for each $\{\sigma_i\}_{i=1}^r \geq 0$ to zero with $\rho = \eta_1\lambda_1$,
\bqs\notag
\frac{1}{\rho}\sigma_i^3 - \frac{1}{\rho}\hat{\sigma}_i\sigma_i^2 + (\frac{1}{\rho} + 2)\sigma_i - \frac{1}{\rho}\hat{\sigma}_i = 0.
\eqs
Assume that $a = \frac{1}{\rho}$, $b = - \frac{1}{\rho}\hat{\sigma}_i$, $c = \frac{1}{\rho} + 2$ and $d = - \frac{1}{\rho}\hat{\sigma}_i$, we define that
$\Delta = \alpha^2 + \beta^3$, where
\bqs\notag
 \alpha = \frac{bc}{6a^2} - \frac{b^3}{27a^3} - \frac{d}{2a}, & & \beta = \frac{c}{3a} - \frac{b^2}{9a^2}.
\eqs
Then the three possible roots of the above cubic equation include one real root and two complex roots,
\bqs\notag
 \sigma_i^{(1)} = & -\frac{b}{3a} + \sqrt[3]{\alpha + \sqrt{\Delta}} + \sqrt[3]{\alpha - \sqrt{\Delta}}, \\
 \sigma_i^{(2)} = & -\frac{b}{3a} + \frac{\sqrt{3}i - 1}{2}\sqrt[3]{\alpha + \sqrt{\Delta}} - \frac{\sqrt{3}i + 1}{2}\sqrt[3]{\alpha - \sqrt{\Delta}}, \\
 \sigma_i^{(3)} = & -\frac{b}{3a} - \frac{\sqrt{3}i + 1}{2}\sqrt[3]{\alpha + \sqrt{\Delta}} + \frac{\sqrt{3}i - 1}{2}\sqrt[3]{\alpha - \sqrt{\Delta}}.
\eqs
According to the $\Delta$, three different scenarios of roots are as follows,
\begin{itemize}
  \item if $\Delta > 0$, the equation has a real root and a conjugate pair of complex roots
  \item if $\Delta = 0$, the equation has three real roots:
  \begin{enumerate}
    \item when $\alpha^2 = \beta^3 = 0$, there are three equal real roots;
    \item when $\alpha^2 = - \beta^3 \neq 0$, there are two equal real roots.
  \end{enumerate}
  \item if $\Delta < 0$, there are three unequal real roots.
\end{itemize}

{
\bibliographystyle{IEEEtran}
\bibliography{OMTC-extend-arxiv}
}

\begin{IEEEbiography}
{Peng Yang} received his PhD from the School of Computer Engineering at the Nanyang Technological University, Singapore. He is currently a research fellow in the Computer, Electrical and Mathematical Sciences and Engineering Division at King Abdullah University of Science and Technology (KAUST), Saudi Arabia. He was a research scientist with Institute for Infocomm Research (I2R), A*STAR, Singapore from 2013 to 2016 and a research scientist with Tencent AI Lab, China in 2017. His research interests are machine learning, data mining, and bioinformatics.
\end{IEEEbiography}
\vspace{-0.1in}

\begin{IEEEbiography}
{Peilin Zhao} is currently a Senior Algorithm Expert in Ant Financial service group. His research Interests are Machine Learning and its applications to Big Data Analytics, etc. He previously worked at A*STAR, Singapore,  Baidu Research China,  Rutgers University USA. He received his PHD degree from  Nanyang Technological University and his bachelor degree from Zhejiang University.
\end{IEEEbiography}
\vspace{-0.1in}

\begin{IEEEbiography}
{Xin Gao} received the BS degree in computer science from Tsinghua University, in 2004, and the PhD degree in computer science from University of Waterloo, in 2009. He is currently an associate professor of computer science in the Computer, Electrical and Mathematical Sciences and Engineering Division at King Abdullah University of Science and Technology (KAUST), Saudi Arabia. Prior to joining KAUST, he was a Lane Fellow at Lane Center for Computational Biology in School of Computer Science at Carnegie Mellon University. His research interests are machine learning and bioinformatics.
\end{IEEEbiography}

\end{document}